\documentclass{article}

\usepackage[preprint]{neurips_2026}

\usepackage[utf8]{inputenc} %
\usepackage[T1]{fontenc}    %
\usepackage{hyperref}       %
\usepackage{url}            %
\usepackage{booktabs}       %
\usepackage{amsfonts}       %
\usepackage{nicefrac}       %
\usepackage{microtype}      %
\usepackage{xcolor}         %

\usepackage[utf8]{inputenc} %
\usepackage[T1]{fontenc}    %
\usepackage{hyperref}       %
\usepackage{url}            %
\usepackage{booktabs}       %
\usepackage{amsfonts}       %
\usepackage{nicefrac}       %
\usepackage{microtype}      %
\usepackage{xcolor}         %

\usepackage{algorithm}
\usepackage{algpseudocode}

\usepackage[utf8]{inputenc} %
\usepackage[T1]{fontenc}    %
\usepackage{hyperref}       %
\usepackage{url}            %
\usepackage{booktabs}       %
\usepackage{amsfonts}       %
\usepackage{nicefrac}       %
\usepackage{microtype}      %
\usepackage{xcolor}         %

\usepackage{bm} %
\usepackage{amsmath}
\usepackage{amssymb}
\usepackage{amsfonts}
\usepackage{graphicx}
\usepackage{amsthm}
\usepackage[shortlabels]{enumitem}
\usepackage{todonotes}
\usepackage[capitalize,noabbrev]{cleveref}
\usepackage{subcaption}

\newcommand{\thetahat}{\hat{\theta}}
\newcommand{\thetastar}{\theta_\star}

\renewcommand{\epsilon}{\varepsilon}
\def\defeq{\stackrel{\mathrm{def}}{=}}
\def\setof#1{\left\{#1  \right\}}
\newcommand{\sep}{\,\vert \,}

\def\abs#1{\left|#1  \right|}

\def\indicator#1{\mathbf{1}\left\{ #1 \right\}}

\theoremstyle{plain}
\newtheorem{theorem}{Theorem}[section]

\newtheorem{lemma}[theorem]{Lemma}
\newtheorem{corollary}[theorem]{Corollary}
\theoremstyle{definition}

\theoremstyle{remark}

\newcommand{\E}{\mathbb{E}}
\newcommand{\Var}{\mathrm{Var}}
\newcommand{\Cov}{\mathrm{Cov}}

\def\calM{\mathcal{M}}
\def\calV{\mathcal{V}}

\def\calQ{\mathcal{Q}}

\newcommand{\R}{\mathbb{R}}

\newcommand\ppi{\boldsymbol{\pi}}

\DeclareMathOperator*{\argmax}{arg\,max}

\usepackage{natbib}
\usepackage{pifont}

\newcommand{\tasktrue}{\theta^{(t)}_\star}
\newcommand{\tasks}{\mathcal{T}}
\newcommand{\task}{t}
\newcommand{\taskindex}{{(t)}}

\newcommand{\dataset}{\mathcal{D}}

\newcommand{\numtasks}{T}
\newcommand{\classical}{\textsc{Classical}\ }
\newcommand{\ppipp}{\textsc{PPI++}\ }
\renewcommand{\ppi}{\textsc{PPI}\ }
\newcommand{\greppi}{\textsc{GRePPI}\ }
\newcommand{\areppi}{\textsc{ARePPI}\ }
\newcommand{\labeled}{\mathcal{L}}
\newcommand{\reppi}{\textsc{RePPI}\ }
\renewcommand{\models}{\calM}
\newcommand{\model}{\mathfrak{m}}
\newcommand{\excepttask}{{(-\task)}}

\newcommand{\gptfive}{\texttt{gpt-5-mini}\ }
\newcommand{\fouromini}{\texttt{gpt-4o-mini}\ }

\newcommand{\xhdr}[1]{\vspace{0mm}\noindent{{\bf #1.}}}

\usepackage{hyperref}
\hypersetup{
  colorlinks = true, 
  urlcolor = {red},
  linkcolor = {blue},
  citecolor = {green!50!black}
}

\newif\ificmlworkshop
\newif\iffullpaper

\title{Prediction-Powered Inference Across Many Tasks for AI Evaluation \& Social Science Research}

\author{
  Nicolas Emmenegger \\
  MIT \\
  Cambridge, MA \\
  \texttt{nemm@mit.edu} 
  \And
  Ellery Stahler \\
  MIT \\
  Cambridge, MA \\
  \texttt{ellerys@mit.edu} 
  \And
  Chara Podimata \\
  MIT\\
  Cambridge, MA \\
  \texttt{podimata@mit.edu} 
}

\icmlworkshopfalse
\fullpapertrue

\begin{document}

\maketitle

\begin{abstract}
  Many applications require statistically valid inference across many related ``\emph{tasks}'', while using only a handful of high-quality labels per hypothesis. In AI evaluation, these tasks may correspond to model behaviors across prompts, subgroups, or hypotheses; in social science surveys, they may correspond to related questions, populations, or measurement conditions.  Prediction-powered inference (PPI) uses abundant but inexpensive proxy measurements to improve inference from limited, ``ground-truth'' labels, but commonly used methods treat tasks independently and therefore fail to exploit shared structure across related tasks. This limitation is especially important in settings where only a small number of labels are available per task. To address this issue, we introduce a multi-task prediction-powered inference framework that uses labeled data from related tasks to improve power while preserving task-specific inference. Our methods exploit the shared structure in the proxy-ground-truth relationship through cross-task recalibration, while retaining within-task rectification and power tuning to construct accurate point estimates and confidence intervals. We prove that efficiency gains beyond power-tuned PPI are only possible when the proxy-ground-truth relationship contains nonlinear structure; affine cross-task recalibrations are asymptotically equivalent to using the original proxy. We complement our theoretical findings with experiments on synthetic and semi-synthetic datasets, as well as a case study auditing language models on election-related information during the 2024 U.S. presidential election. Using a large human-annotation study, we show that cross-task recalibration can substantially reduce confidence interval widths when labels are scarce.
\end{abstract}

\section{Introduction}
\label{sec:intro}

Modern data analysis increasingly requires valid inference across many related \emph{tasks}. In AI evaluation, these tasks may correspond to model behaviors across benchmarks, prompts, population subgroups, or models. In social science surveys, they may correspond to related questions, subject populations, or measurement conditions. In each case, practitioners often have access to inexpensive proxy measurements at scale (such as AI-generated scores or otherwise-automated annotations, or historical responses under different protocols) but can obtain only a small number of high-quality measurements, such as expert annotations, validated human judgements, or survey responses collected under the target protocol. We refer to these high-quality measurements as \emph{ground-truth labels}, not because they are noiseless, but because they define the target measurement process for inference.

\begin{figure*}[t!]
\label{fig:main:humans}
    \centering
    \begin{subfigure}[t]{0.4\textwidth}
        \centering
        \includegraphics[width=\linewidth]{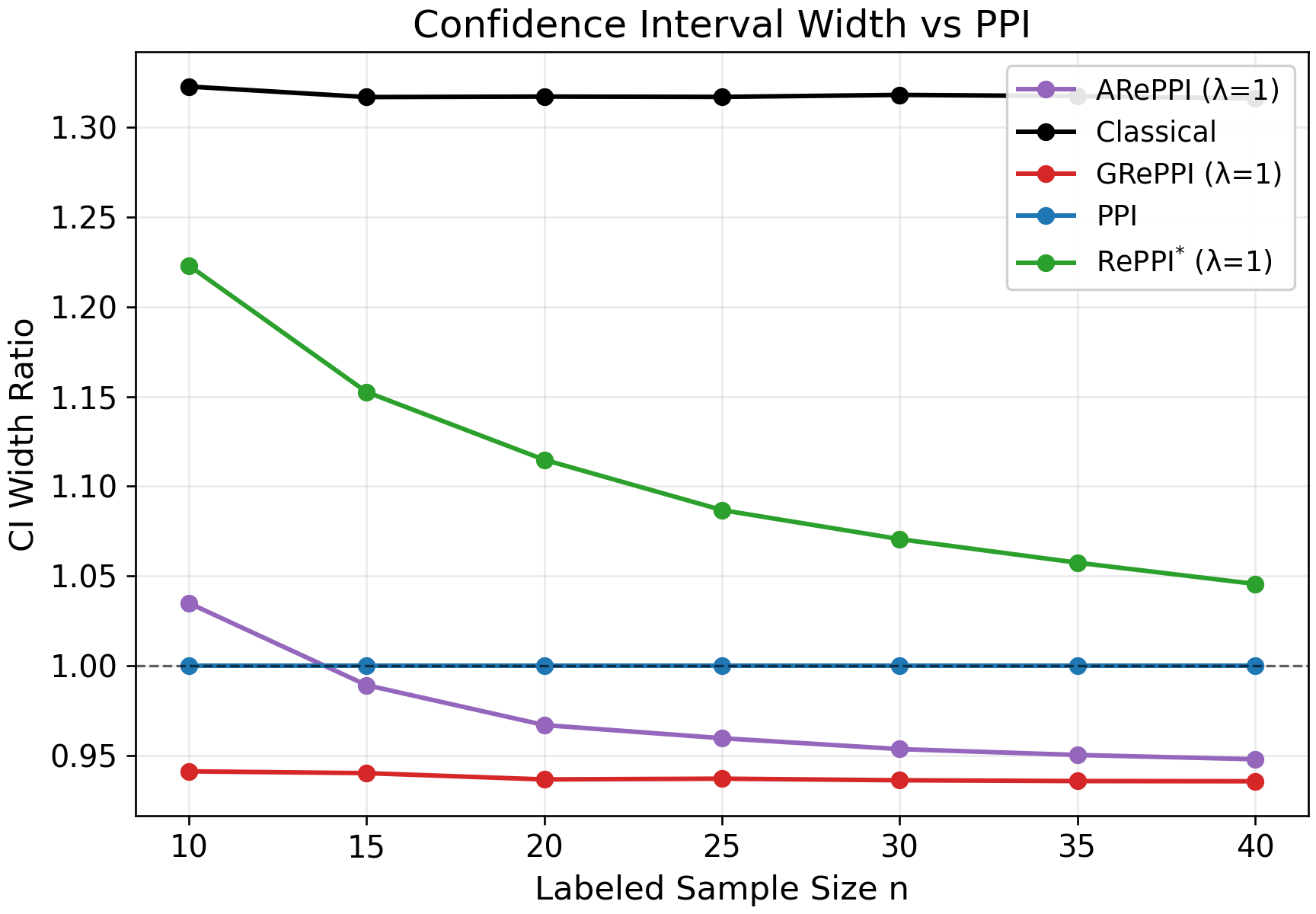}
        \caption{Wald CI Widths without Power Tuning}
        \label{fig:3-sub2}
    \end{subfigure}
    \hspace{1cm}
    \begin{subfigure}[t]{0.4\textwidth}
        \centering
        \includegraphics[width=\linewidth]{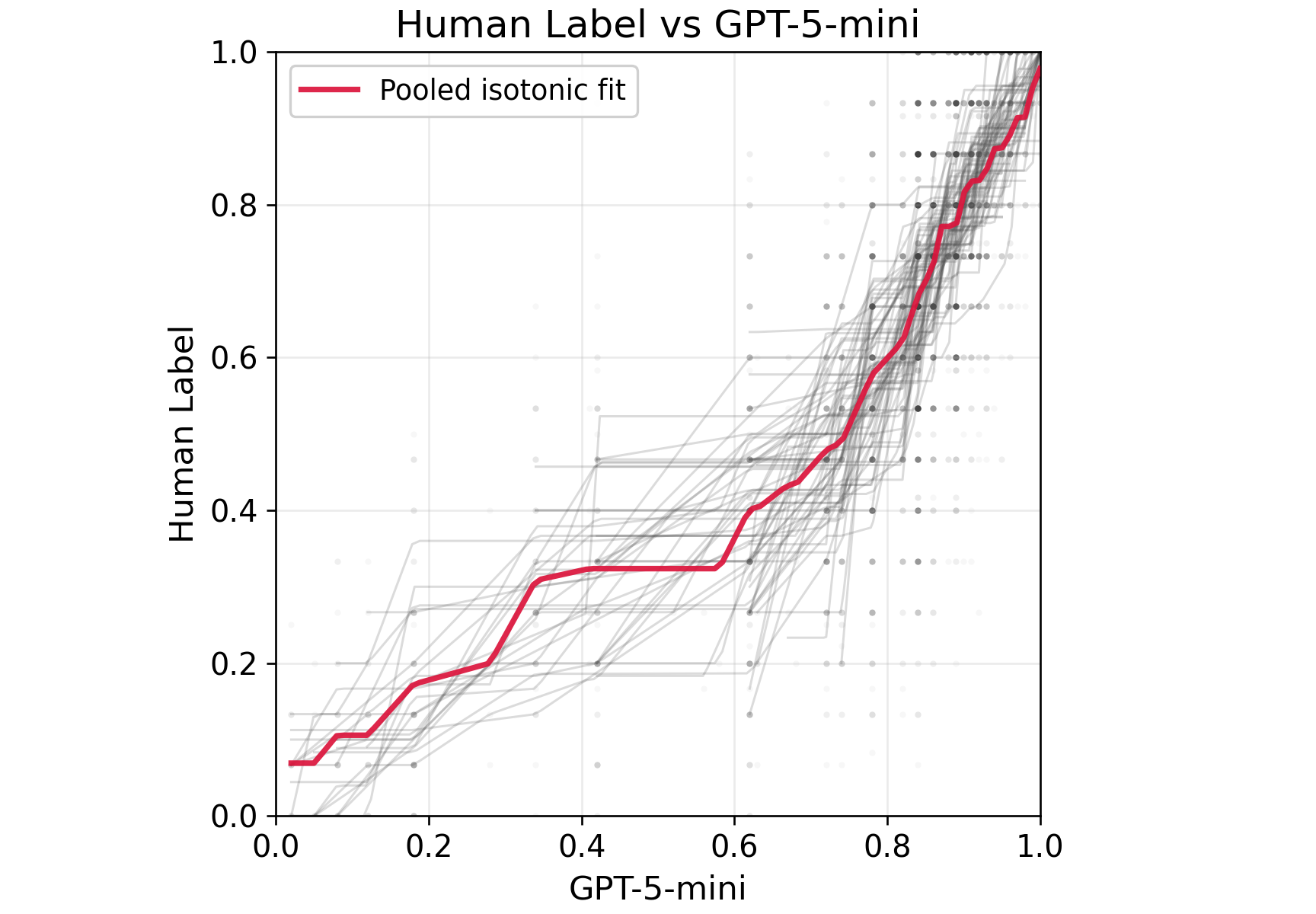}
        \caption{Isotonic Regression of $Y$ onto $\hat{Y}$}
        \label{fig:3-sub3}
    \end{subfigure}    
    \caption{Effects of using our methods (\greppi, \areppi) on a real dataset with human annotations. The plots here are shown without power-tuning (see discussion in Section~\ref{sec:casestudy}). We treat \gptfive predictions as surrogate outcomes, and human annotations as ground truth. Our methods achieve sharper CIs thanks to cross-task sharing of information.} 
    \label{fig:main-3}
\end{figure*}

Prediction-powered inference (PPI)~\citep{PPI-2023} recently popularized a principled way to combine abundant proxy data with limited labeled data while retaining statistically valid confidence intervals, and extended it to more general convex M-estimation problems.  The key idea is to use machine predictions as \emph{surrogate outcomes} and then correct (aka ``rectify'') their bias using the labeled sample. This debiasing step (via the \emph{``rectifier''}) preserves validity even when the proxy is systematically biased. Outside of the machine learning and AI communities, such estimators also have close cousins in the literatures on semiparametric inference and missing data \citep{Robins-1994, Scharfstein1999AdjustingFN, Chen2000AUA}, as well as model-assisted survey sampling \citep{Cassel1976SomeRO, Sarndal2003ModelAS}.\footnote{We defer to Section~\ref{appendix:related-work} for a discussion on the related work and on how our work places among these different threads of literature, as well as the surveys of \cite{kluger} and \cite{Mozer2026PPIIT} for more information on tracing back the historical roots of PPI.} However, existing PPI-style methods leveraging black-box AI predictions are typically applied \emph{separately} within evaluations. Their efficiency therefore strongly depends on having enough labeled observations within each task to estimate the rectifier / debiasing step accurately. In a very sparse-label regime, the variance in the rectification step becomes the primary statistical bottleneck\footnote{In a finite-population setting, it is the \emph{only} statistical bottleneck.}.

At the same time, \emph{tasks in AI evaluation and social science studies are rarely unrelated}: in AI audits, systematic discrepancies between model-based judgments and human annotations can persist across prompts, models, or evaluation criteria. In a similar vein, in political research settings, response patterns often exhibit shared structure across related questions or populations. This shared structure presents an opportunity for researchers and auditors: learn the proxy-ground-truth relationship from other tasks, and then use it to improve inference where labels are scarce. Importantly, this learning step should be done in a computationally lightweight manner: fully retraining the prediction model (increasingly often, a frontier large language model) is infeasible for both AI evaluation tasks and social science research.  

The challenge with lightweight approaches, however, is that naively ``pooling'' information across tasks would invalidate inference. Directly sharing or averaging rectifiers across tasks is biased as soon as discrepancies between the surrogate and the ground-truth are (even mildly) heterogeneous across tasks. More fundamentally, we show that linear cross-task sharing (i.e., only rescaling and shifting proxy scores using information from other tasks) \emph{cannot} improve asymptotic efficiency beyond what is already achievable by PPI++~\citep{PPIpp-2023}, the power-tuned \citep{Rothery1982TheUO, Rubin2008EmpiricalEM} variant of PPI that rescales the proxy estimate to minimize asymptotic variance. Thus, the problem is not simply whether to pool information, but \emph{how} to use the cross-task structure while preserving valid inference for each individual target. We therefore ask:
\begin{center}
{\bf \emph{Can we design PPI-style methods that leverage the shared structure across related tasks to improve statistical efficiency, while retaining valid per-task inference when labeled data are scarce?}} 
\end{center}

\subsection{Contributions and Roadmap} 

\xhdr{Recalibrated PPI Across Many Tasks} We frame AI evaluation as a prediction-powered inference problem across many tasks (Section~\ref{sec:preliminaries})), and propose algorithms that improve inferential power while keeping within-task validity (Section~\ref{sec:methodology}). Our methods build on a simple insight: lightweight recalibration \cite{RePPI-2025} of model outputs can be learned from similar tasks, improving predictive power for the task of interest. Our first method, \greppi, learns a recalibrator  using labeled data from all tasks except the target task, and then performs within-task rectification to obtain valid confidence intervals for the target parameter. This leave-one-task-out construction allows the procedure to exploit shared proxy-ground-truth structure while ensuring that validity remains anchored in target task labels. We introduce a second method, \areppi, which is a cross-validated variant designed to combat possible losses in efficiency when the tasks are, in fact, overly heterogeneous. In this case \areppi will shift to exploit \emph{local} structure over \emph{global} structure. Next, our theoretical analysis clarifies when cross-task surrogate learning can improve traditional PPI with in-task power tuning. In particular, we show that efficiency gains beyond power-tuned PPI require nonlinear structure in the proxy–ground-truth relationship. This constitutes a prescriptive, operational result for practitioners: cross-task learning can yield significant downstream improvements not simply when tasks are similar, but when the shared structure enables recovery of non-negligible nonlinear components of the relationship between surrogates and ground truth. 

\xhdr{Case Study on 2024 US Presidential Election Dataset} For our case study (Section~\ref{sec:casestudy}), we apply our methods to the dataset of~\citep{largescale2025cen}, who collected daily responses from 12 LLMs, including both offline and online-augmented models, to a fixed set of election-related queries over the four months preceding the 2024 U.S. presidential election. \citet[Fig.~4]{largescale2025cen} found (using embedding-based cosine similarity) that model responses to the same election-related queries could differ substantially when the prompt included a demographic or political-affiliation steer. However, embedding similarity leaves open an important question: whether these response differences reflect changes in superficial wording and style, or changes in the substantive content, framing, and factual information conveyed to the user. We refer to this latter notion as \emph{deep-meaning similarity}. This distinction is important because embedding representations can encode many features of text that are not central to the inferential question. Conversely, election-related LLM responses may differ in substantively meaningful ways that embedding similarity alone may not reliably isolate. Indeed, \citet{angwin2024seeking} documented factual errors in LLM responses to election-related questions during the 2024 U.S. presidential election period, that could be hard to detect from embedding similarity. We demonstrate how our method can be employed to infer whether an LLM gives substantively different answers to the same election-related question when it is told different information about the user's demographics or political beliefs, with as little human survey data as possible; Figure~\ref{fig:main-3} presents our results. To obtain said results, we collected a large-scale human annotated sample (details in Section~\ref{sec:casestudy}). Finally, we conduct a series of experiments showing consistent gains of our methods compared to PPI benchmarks on semi-synthetic datasets, and empirically validate our theoretical results. While our paper is designed to give accurate point estimates and confidence intervals, an extension to hypothesis testing---through the duality of testing and CIs---is immediate.

\section{Preliminaries}
\label{sec:preliminaries}

\subsection{Formal Problem Statement}
\label{sec:formalsetup}
We study statistical estimation and inference of parameters $\tasktrue$ for $\task \in \tasks$, with $\abs{\tasks} = \numtasks$. We think of the tasks $\tasks$ as different hypotheses to test, or parameters to estimate and infer, about one or multiple systems, across a (not necessarily, but most likely) common domain. 
Although each task has its own target parameter, tasks often share a common measurement pipeline: ground-truth labels are collected using the same annotation or survey protocol, and proxy outcomes are generated using the same automated scoring procedure. Our methods exploit this shared structure to improve efficiency but, crucially, validity does not rely on the shared-structure assumption. As an example, in our case study (see Section~\ref{sec:casestudy:teaser} for background and Section~\ref{sec:casestudy} for details), the tasks correspond to different steering prompt-pairs of LLMs.

For each task $\task$, we consider a fixed set of $N$ datapoints\footnote{The extension to the case of heterogeneous $N^\taskindex$ is immediate, but we omit it here for notational simplicity.}. Each datapoint is described by covariates $X_i^{\taskindex}$ and has an associated ground-truth outcome $Y_i^\taskindex$. In this paper, we consider the basic setting of learning a collection of mean target parameters, although other extensions are possible for general (convex) $Z$-estimators. Our estimands of interest are therefore described as
\begin{equation}
    \tasktrue = \frac{1}{N}\sum_{i \in [N]} Y_i^\taskindex.
\end{equation}
This finite-population formulation is natural in AI evaluation, benchmarking, and auditing, where the evaluation set is often constructed in advance and treated as the population of interest.
The challenge, however, is that obtaining $Y_i^\taskindex, \forall i \in [N], \task \in \tasks$ can be prohibitively expensive, so we will only observe labels for a subset of points. Specifically, for each $\task \in \tasks$ we will only observe them for the subset $\labeled^\taskindex \subset [N]$, drawn using simple random sampling without replacement from $[N]$. We will also use $O_i^\taskindex = \indicator{i \in \labeled^\taskindex}$. To aid prediction and inference, we assume, however, access to machine-predicted labels $\hat{Y}_i^\taskindex = f^\taskindex(X_i^\taskindex)$ that we will observe for all $i\in[N]$. We note here that \cite{li2025prediction} focus on a similar setup, in a superpopulation setting, and propose an empirical-Bayes inspired improvement over PPI++'s power-tuning mechanism (c.f. Section~\ref{sec:power-tuning}). We will target an orthogonal way to learn from multiple tasks that can be applied independently of theirs, or freely combined\footnote{Albeit losing some of the theoretical guarantees (see experiments in Section~\ref{sec:casestudy}.) We refer to Section~\ref{appendix:related-work} for a more detailed comparison.}.  

\subsection{Motivating Case Study: AI Audits and Social Science Studies}
\label{sec:casestudy:teaser}
To make the abstract setup concrete, we outline here an application that will run through the rest of the paper (deferring details to Section~\ref{sec:casestudy}). There is growing concern \citep{sharma2024towards, perez2023discovering, anwar2024foundational} that LLMs silently personalize their answers when they can infer attributes about the user, distorting the information that users rely on. This concern has only grown since LLMs are increasingly becoming information intermediaries about elections and have been documented to exhibit significant persuasion capabilities for voters~\citep{lin2025persuading, hackenburg2025levers}. Our case study aims to answer the question: \emph{Will an LM answer a \textbf{factual} question about political candidates differently, depending on what it \textbf{assumes} about the user's \textbf{beliefs}?}

We use the data from \citet{largescale2025cen} that includes a set $\calQ$ of predefined questions that were asked to different frontier LLMs from a set $\models$ across a set $\calV$ of prompt variations (see App.~\ref{app:case-study} for details).\footnote{These questions were asked repeatedly over 4 months leading up to the 2024 US Presidential Election. We will only focus on a single date (11/04/2024) and leave the longitudinal considerations for future work. However, for what concerns our methods, one could similarly exploit longitudinal (across time) information as we do here (across tasks). Clearly, annotator bias will be similar across the time dimension too, especially if the annotation protocol is kept consistent over time.} Let $R_{\model,v,q}$ denote the response of model $\model \in\models$ under prompt variation $v\in\calV$ on question $q\in\calQ$. In this setting, a task $\task$ is characterized by a \emph{steering} comparison $(\model, v_1, v_2)$ for a fixed model $\model \in\models$ and prompt variations $v_1 ,v_2 \in \calV$. 
The fixed dataset to evaluate is therefore characterized by the covariates 
$X_i^{\taskindex} = (q_i,\model, v_1, v_2, R_{\model, v_1, q_i},\, R_{\model, v_2, q_i})$ when the steering comparison for task $\task$ is $(\model,v_1,v_2)$. For each covariate $X_i^\taskindex $, we elicit $M$  human annotations, $Y_{i,m}^\taskindex \in [0,1]$ for $m \in [M]$, quantifying how much the response $R_{\model,v_1,q_i}$ differs from $R_{\model,v_2,q_i}$. We assume that $M$ is large enough that the variance in the estimator due to 
$\Var(Y^\taskindex_{i,m} \sep X_i^\taskindex)$ is negligible.
This allows us to treat $Y_i^\taskindex := 1/M\sum_{m \in [M]} Y_{i,m}^{\taskindex}$ as ground truth and ignore annotation variance in the labels. 

\ificmlworkshop
This framework is quite general, and confidence intervals on these target parameters allow us to test a variety of task-level and cross-task hypotheses of interest. For instance, we may be interested in testing
$$
    H_{0,\tau}^\taskindex: \; \tasktrue \geq \tau \text{ vs. } H_{1,\tau}^\taskindex: \; \tasktrue < \tau,
$$
for some parameter $\tau$ specified by political scientists or auditors. Here, rejecting the null provides evidence that the two prompted responses are less similar than the application-specific threshold $\tau$. Alternatively, the hypothesis $H_0^{(t,k)}: \tasktrue \geq \theta_{\star}^{(k)}$ could correspond to testing whether model A or model B is more prone to political \emph{persuasion} or \emph{sycophancy}. Of course, the hypotheses that we want to test need to be prespecified, and suitable multiple testing corrections need to be implemented to bound Type I error. 
\fi

\subsection{Background on Surrogate Outcomes}
\label{sec:surrogateoutcomes}
\newcommand{\optimalsurrogate}{g_\theta^*}

Let us focus first on the single-task setting, where $\numtasks=1$, and drop the task superscript. Once again, we assume access to $N$ surrogate outcomes $\hat{Y}_i$ and access to $n$ ground-truth annotations. For this interlude, to illustrate the prior work of \cite{RePPI-2025}, let us assume that $\thetastar$ is the minimizer of a (finite) population target loss $\ell_\theta(X,Y)$. Since $Y$ is only observed on the labeled subset $\labeled$, we cannot directly minimize the full-benchmark average of $\ell_\theta$. To address this in a more efficient way, \citet{RePPI-2025} note that recent PPI estimator proposals can be written as 
\begin{equation}\label{eq:reppi-obj}
    \hat{\theta}_g^{\textsc{ppi}} = \arg\min_{\theta} \underbrace{\frac{1}{n}\sum_{i\in \labeled} \ell_\theta(X_i,Y_i) - \Bigg[\frac{1}{n}\sum_{i\in \labeled} g_\theta(X_i,\hat{Y}_i)}_{\text{"Rectifier"}} - \frac{1}{N}\sum_{i \in [N]} g_\theta(X_i,\hat{Y}_i)\Bigg].
\end{equation}
where $g_\theta (X, \hat{Y})$ is an imputed loss.\footnote{We note that this is a translation of \citep{RePPI-2025} Equation (4), to our finite-population setting. In the superpopulation setting, we typically assume independence between the labeled and unlabeled covariates, so the datapoints used in the first two sums would be non-overlapping with the points used in the last two sums.} The crucial insight needed for our paper is that all $g_\theta$ functions lead to (asymptotically) valid inference\footnote{The asymptotic nature is because the plug-in estimate used for power-tuning (c.f. Section~\ref{sec:power-tuning}) typically uses the same data as the rectifying/debiasing step.}, but they are not all equal; some give rise to better inferential power. \citet{RePPI-2025} adapt a proof from \cite{Robins-1994} to characterize under mild regularity conditions the asymptotically optimal imputed loss. In our finite-population setting, this result can be adapted to show that any loss $\optimalsurrogate$ whose score is proportional to the conditional score of $\ell$ at $\theta = \thetastar$ is asymptotically optimal, i.e.,
\begin{equation}\label{eq:optimalsurrogate}
    \nabla_\theta \optimalsurrogate(X,\hat{Y}) \propto \E[\nabla_\theta \ell_\theta(X,Y) \sep X, \hat{Y}],
\end{equation}
at $\theta = \thetastar$\footnote{Regularity and tools from asymptotic statistics allow the requirement to be relaxed to only hold at $\thetastar$, instead of over all
$\theta$.} is asymptotically optimal. To be precise, this means that $\sqrt{n}(\hat{\theta}^{\textsc{ppi}}_{\optimalsurrogate} - \thetastar) \to \mathcal N(0, \sigma^{\textsc{ppi}}_{\optimalsurrogate})$ with $\sigma^{\textsc{ppi}}_{\optimalsurrogate} \leq \sigma^{\textsc{ppi}}_g$ for any other imputed loss $g$. They propose approximating this score via flexible machine learning methods, an approach they coin as \emph{recalibration}. However, their algorithm requires careful sample splitting, which tends to be impractical within each task, at the small sample sizes we operate in. These observations motivate our idea: rather than learning the recalibration map using only labels from the target task, we ask whether it can be learned from labels in other related tasks while preserving valid inference for the target task.

\subsection{Power Tuning and the Variance Functional $V(s,\lambda)$}
\label{sec:power-tuning}
Power tuning \citep{PPIpp-2023, miao:diagonal} (closely related to optimal control variates \citep{Rothery1982TheUO, Davidson1992RegressionbasedMF, Chen2000AUA} as well as empirical efficiency maximization \citep{Rubin2008EmpiricalEM}) is used to robustify the rectified estimator against misspecification of $g_\theta$ by rescaling the gradient in Eq.~\eqref{eq:optimalsurrogate} by a scalar $\lambda \in \R$, that minimizes a plug-in version of the variance of $\hat{\theta}^{\mathrm{PPI}}_g$. Intuitively, $\lambda$ controls how strongly the estimator \emph{trusts} the surrogate. When $\lambda = 0$, the estimator ignores the surrogate and reduces to the classical labeled-sample mean. When $\lambda$ is large, the estimator relies more heavily on the surrogates. We focus on mean estimation under the finite-population setup of Sec.~\ref{sec:formalsetup}: with the squared loss $\ell_\theta(X,Y) = \tfrac12 (Y-\theta)^2$ we have $\nabla_\theta \ell_\theta(X,Y) = \theta - Y$. The first-order condition of Eq.~\eqref{eq:reppi-obj} along with the characterization of the efficient $g_\theta$ in 

Eq.~\eqref{eq:optimalsurrogate} imply that the optimal surrogate gradient should be instantiated with
$$
    \nabla_\theta g_\theta(X, \hat{Y}) = \lambda(\theta - s(X, \hat{Y})),\quad \text{where } s : (X, \hat{Y}) \mapsto \E[Y \sep X, \hat{Y}].
$$
 
In practice, we will estimate $s$ using flexible machine learning methods, possibly incorporating domain specific knowledge. In our LLM-as-a-judge setting, we will employ isotonic regression, since we empirically observe that ground truth labels tend to follow a monotonic transformation of the LLM scores. Since the $\theta$-terms inside the rectifier cancel, the $\lambda$-tuned estimating equation yields
\begin{equation}\label{eq:fp-estimator}
    \hat{\theta}_\lambda
    =
    \frac{1}{n}\sum_{i \in \labeled} Y_i
    + \lambda\left\{
        \frac{1}{N}\sum_{i \in [N]}  s(X_i,\hat{Y}_i)
        -
        \frac{1}{n}\sum_{i \in \labeled}  s(X_i,\hat{Y}_i)
    \right\}.
\end{equation}
This leaves the question of how to choose $\lambda$. In Appendix~\eqref{appendix:oracle-variance}, we prove that by appealing to results on i.i.d. sampling of $n$ elements from a population of $N$ elements that
\begin{equation*}
    V(s, \lambda)
    :=
    \Var[\hat{\theta}_\lambda]
    =
    \frac{1}{n}\left(1-\frac{n-1}{N-1}\right)
    \Bigl[\sigma_Y^2 - 2\lambda\,\Cov_N(Y, s) + \lambda^2\,\Var_N(s)\Bigr],
\end{equation*}
where $\Var_N, \Cov_N$ denote finite-population variance/covariance. Minimizing over $\lambda$ gives $\lambda^\star(s) = \Cov_N(Y, s)/\Var_N(s)$ and the oracle power-tuned variance
\begin{equation}\label{eq:Vstar}
    V^\star(s)
    =
   \frac{1}{n} \left(1-\frac{n}{N}\right)
    S_Y^2 \bigl(1-\rho_N^2(Y, s)\bigr),
\end{equation}
where $S_Y^2 = \frac{1}{N-1}\sum_{i \in [N]} (Y_i - \thetastar)^2$ is the sample variance. $V^\star(s)$ depends on $s$ only through $\rho_N^2(Y, s)$, the correlation between the true outcome $Y$ and the surrogate outcome $s$. The intuition behind this is easy to see. When $Y$ and $s$ are more strongly correlated, power-tuned PPI~\citep{PPIpp-2023} reduces asymptotic variance more significantly. Correlation exactly (but only) measures the strength of linear relationships, which explains why---as we will see formally later Section~\ref{sec:methodology}---exploiting nonlinear structure when choosing $s$ is going to be crucial. We refer to $V^\star$ as an \emph{oracle} because it needs knowledge of the full labeled dataset to evaluate. In practice, we will use plug-in estimation for this quantity, namely
$
    \hat{\lambda}_{\labeled} =  \frac{{\operatorname{Cov}}_{\labeled} \!\left(Y, s\right)
    }{
        \Var_{\labeled}
        (s)
    }$,
i.e. the (random) plug-in estimate over the labeled subset $\labeled \subset [N]$.

\section{Methodology}\label{sec:methodology}
Prior work on recalibration \cite{RePPI-2025} -- while targeting asymptotic efficiency --- is unsuitable in the \emph{extreme} small data regime we are interested in in our case study. Our per-task labeled budget is extremely small ($n_\task$ as low as $20$ to $40$ samples), so within-task RePPI's data splitting leaves $\approx \!n_\task/2$\footnote{A three-way split like in the original paper is not necessary in the mean estimation case.} samples to fit a nonparametric recalibrator that depends on both the surrogate outcomes as well as the (textual) covariates. However, there is hope. In our repeated evaluation and testing scenario, it is believable that the relationship between human judgement and machine judgement is consistent, regardless of the hypothesis being tested. Throughout, we will refer to $\dataset^{\excepttask} = \cup_{j \in \tasks, j\not=\task } \dataset^{(j)}$ as the leave-one-out-dataset and to $\labeled^{\excepttask} = \cup_{j \in \tasks, j\not=\task } \labeled^{(j)}$ as the LOO-labeled dataset.

\begin{algorithm}[t!]
\caption{\greppi}
\label{alg:loo-reppi}
\begin{algorithmic}[1]
\Require Task Datasets
$\setof{\dataset^\taskindex}_{\task \in \tasks} = \{(\hat{Y}_i^\taskindex,Y_i^\taskindex,O_i^\taskindex)_{i=1}^N\}_{\task \in \tasks}$,
recalibration class \(\mathcal H\).
\For{\(\task=1,\ldots,\numtasks\)}
    \State Fit \(\hat{s}^\excepttask\in\mathcal H\) on \(\labeled^\excepttask\).
    \State Compute power tuning parameter $$\hat{\lambda}_{\labeled^\taskindex} = \frac{{\operatorname{Cov}}_{\labeled^\taskindex} \!\left(Y^\taskindex, \hat{s}^\excepttask(\hat{Y}^\taskindex) \right)
    }{
        \Var_{\labeled^\taskindex}
        (\hat{s}^\excepttask (\hat{Y}^\taskindex))
    } \quad \text{or} \quad  \hat{\lambda}_{\labeled^\taskindex} = 1
    $$
    \State Compute Estimator
    $$
     \hat{\theta}^\taskindex = \frac{1}{n} \sum_{i \in \labeled^\taskindex} Y_i^\taskindex - \hat{\lambda}_{\labeled^\taskindex}\bigg(\frac{1}{n}\sum_{i\in \labeled^\taskindex} \hat{s}^\excepttask \left(\hat{Y}_i^\taskindex\right) -  \frac{1}{N} \sum_{i \in [N]} \hat{s}^\excepttask \left(\hat{Y}_i^\taskindex\right)\bigg)
    $$
\EndFor
\end{algorithmic}
\end{algorithm}
\vspace{-0.2cm}

\xhdr{\greppi}
When the practitioner assumes homogeneity in annotator biases, she might aim to exploit similarities between tasks to improve the predictions on the task of interest. As we will show later, linear transformations of the scores have no effect when within-task power-tuning is applied, so we will typically use a nonlinear transformation. In our experiments, we will instantiate this with isotonic regression fit purely on $\hat{Y},Y$ pairs -- that is, ignoring covariate information -- for its simplicitly, and because LLM judgements are known to often be miscalibrated \citep{xion:overconfident, detommaso:multicalibration}.\footnote{Experimentally, we observed similar results with more general nonparametric approaches such as XGBoost \cite{guestrin}.}. This gives rise to a particularly simple  first method that fits $s$ on $\labeled^{\excepttask}$ and then tunes $\lambda$ on $\labeled^\taskindex$ to build an improved PPI estimator on $\dataset^\taskindex$. The resulting algorithm \emph{\textsc{G}lobally \textsc{Re}calibrated \textsc{P}rediction-\textsc{P}owered \textsc{I}nference} (\greppi) is given in Algorithm~\ref{alg:loo-reppi}. We note that similarly to \ppipp this estimator is not finite-sample unbiased and its residuals underestimate the true variance (yielding miscoverage); to combat this, power tuning can be skipped, or $\lambda$ can be tuned on a different subset of the labeled data than the rectifier is evaluated (c.f. Section~\ref{sec:casestudy}).

\xhdr{\areppi}
When tasks are in fact homogeneous (in the sense that annotation bias is common), Algorithm~\ref{alg:loo-reppi} leads to substantial improvements in in-task MSE, in-task coverage and in-task Asymptotic CI widths, as we illustrate in Section~\ref{sec:casestudy}. However, when tasks are heterogeneous, there is no safeguard (beyond the mechanism \ppipp offers) against poor predictions of the recalibration function $\hat{s}$, potentially degrading beyond the  \ppipp (\ppipp can be thought of restricting $\mathcal{H}$ to the singleton containing the identity and applying power tuning as above). To that end, instead of fitting $\hat{s}$ purely on all other tasks and hoping for homogeneity, we could include data from the task itself in the recalibration procedure. This safeguards against heterogeneity at the cost of some finite-sample efficiency setbacks at small $n$.
The algorithm is given in Algorithm~\ref{alg:adaptive-reppi}. The idea is as follows. First, we fit a global $\hat{s}^\excepttask$ (Line 2). Then, we fit the local data into two folds (Line 3). One fold is used to learn an adaptive $\hat{s}^{\mathrm{ada}}$ (Line 7) used to make prediction on the other fold (Line 9) by learning first a local $\hat{s}^{\mathrm{oof}}$ (Line 5) (using cross-validation, or LOO predictions, in order to avoid fitting $\hat{s}$ on the labels for which we get surrogate predictions). Then the global and local $\hat{s}$ are linearly combined (Line 7) to  maximize the correlation (Line 6) of the combined recalibrator with the outcomes (in the attempt to minimize downstream oracle variance after power tuning). This gets repeated with the other data split, and predictions are made for the complementary split (Line 9)\footnote{See also Cross Prediction-Powered Inference \cite{CrossPPI-2023}}. These predictions are finally used for power tuning (Line 10), whenever power tuning is used (otherwise $\lambda=1$). On the unlabeled data, we use the predictions given by averaging both adaptive recalibrators (Line 11). The final estimator is computed in Line 12.
\begin{algorithm}[t!]
\caption{\areppi}
\label{alg:adaptive-reppi}
\begin{algorithmic}[1]
\Require $\setof{\dataset^\taskindex}_{\task \in \tasks} = \{(\hat{Y}_i^\taskindex,Y_i^\taskindex,O_i^\taskindex)_{i=1}^N\}_{\task \in \tasks}$,
recalibration class \(\mathcal H\), number inner folds $K$.
\For{\(\task=1,\ldots,\numtasks\)}
\State Fit $\hat{s}^{\excepttask}\in\mathcal H$ on $\labeled^{\excepttask}$.
\State Split $\labeled^\taskindex$ into two folds $A$ and $B$.
\For{$F \in\{A,B\}$}
    \State Obtain OOF predictions $\hat{s}_F^{\mathrm{oof}}$ on $F$ by $K$-fold CV fit in $\mathcal H$.
    \State Choose
    \[
    \hat{\gamma}_F
    \in
    \argmax_{\gamma\in[0,1]}
    \rho_F^2\!\left(
    \gamma \hat{s}_F^{\mathrm{oof}}(\hat{Y}_i^\taskindex)
    +(1-\gamma)\hat{s}^{\excepttask}(\hat{Y}_i^\taskindex),
    Y_i^\taskindex
    \right)
    \]
    \State Refit $\hat{s}_F\in\mathcal H$ on all of $F$ and define
    \[
    \hat{s}^{\mathrm{ada}}_F
    =
    \hat{\gamma}_F \hat{s}_F
    +(1-\hat{\gamma}_F)\hat{s}^{\excepttask}.
    \]
\EndFor
\State OOF prediction on $\labeled^\taskindex$ as $u_i^\taskindex = s^{\mathrm{ada}}_B(\hat{Y}_i^\taskindex)$ for $i\in A$ and
$u_i^\taskindex = s^{\mathrm{ada}}_A(\hat{Y}_i^\taskindex)$ for $i\in B$.
\State Compute power tuning Factor $\hat{\lambda}_{\labeled^\taskindex}^{\mathrm{oof}}$ with $u_i^\taskindex$ in place of $\hat s_i^\excepttask(\hat{Y}^\taskindex)$
$$
\hat{\lambda}_{\labeled^\taskindex}^{\mathrm{oof}} = \frac{{\operatorname{Cov}}_{\labeled^\taskindex} \!\left(Y^\taskindex,  u_i^\taskindex \right)
    }{
        \Var_{\labeled^\taskindex}
        (u_i^\taskindex) 
    } \quad \text{or} \quad  \hat{\lambda}_{\labeled^\taskindex} = 1.
$$
\State Let $u_i^\taskindex = \frac{1}{2}\left(s^{\mathrm{ada}}_A(\hat{Y}_i^\taskindex) + s^{\mathrm{ada}}_B(\hat{Y}_i^\taskindex)\right)$ for $i \in [N] \setminus \labeled^\taskindex$.
\State Compute estimator
    \[
    \hat{\theta}^\taskindex
    =
    \frac{1}{n}\sum_{i\in \labeled^\taskindex}
    {Y}_i^\taskindex
    -\hat{\lambda}_{\labeled^\taskindex}^{\mathrm{oof}}
    \left(
    \frac{1}{n}\sum_{i\in \labeled^\taskindex}
    u_i^\taskindex
    -
    \frac{1}{N}\sum_{i \in [N]}
    u_i^\taskindex\right).
    \]
\EndFor
\end{algorithmic}
\end{algorithm}

\subsection{Geometric Insights: Nonlinearity of Recalibration}
\label{sec:nonlinearity}

\begin{figure*}[t]
\label{figure:main:synthetic}
    \centering
    \begin{subfigure}[t]{0.24\textwidth}
        \centering
\includegraphics[width=\linewidth]{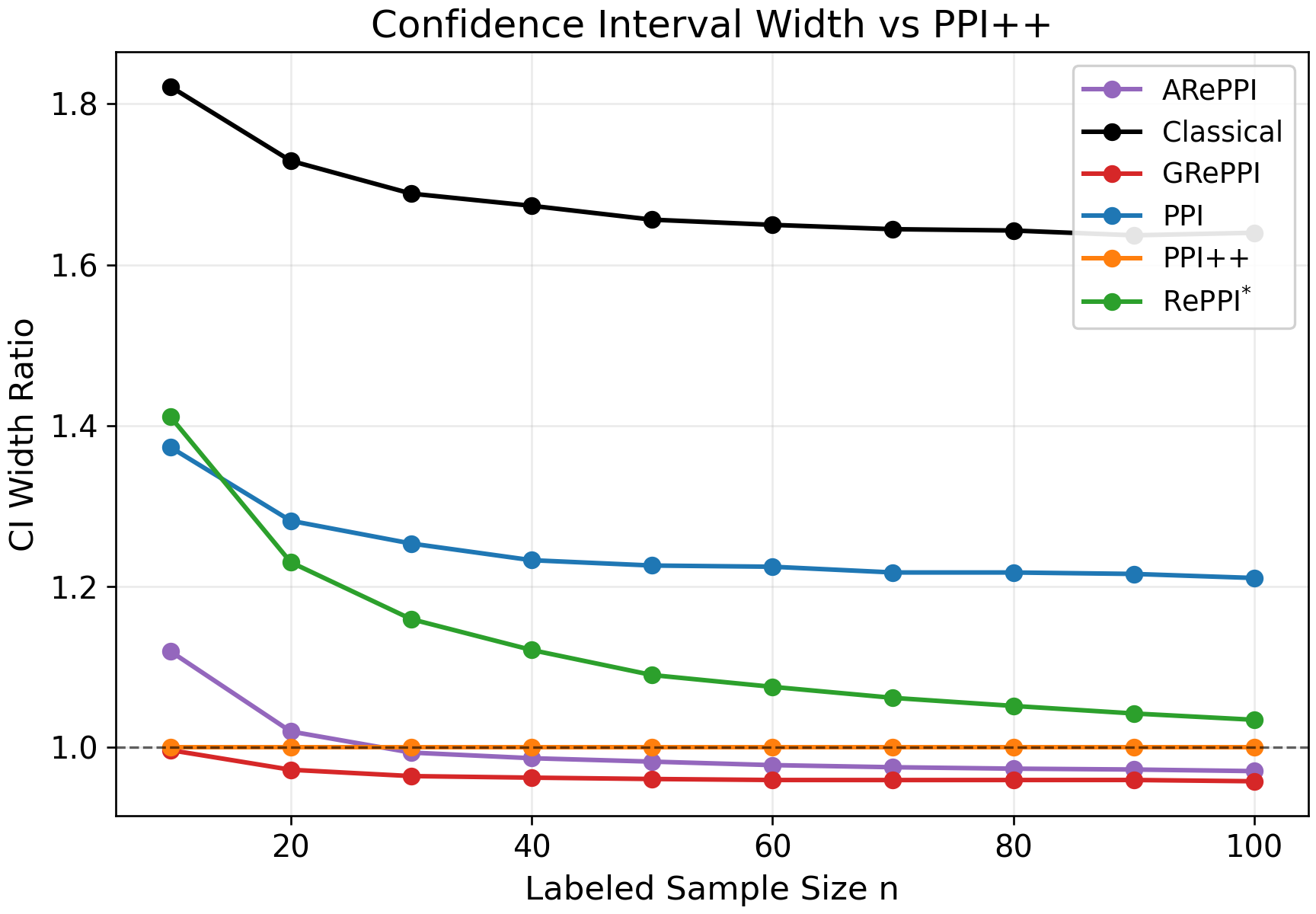}
        \caption{CI Widths}
        \label{fig:1-sub1}
    \end{subfigure}
    \hfill
    \begin{subfigure}[t]{0.24\textwidth}
\centering\includegraphics[width=\linewidth]{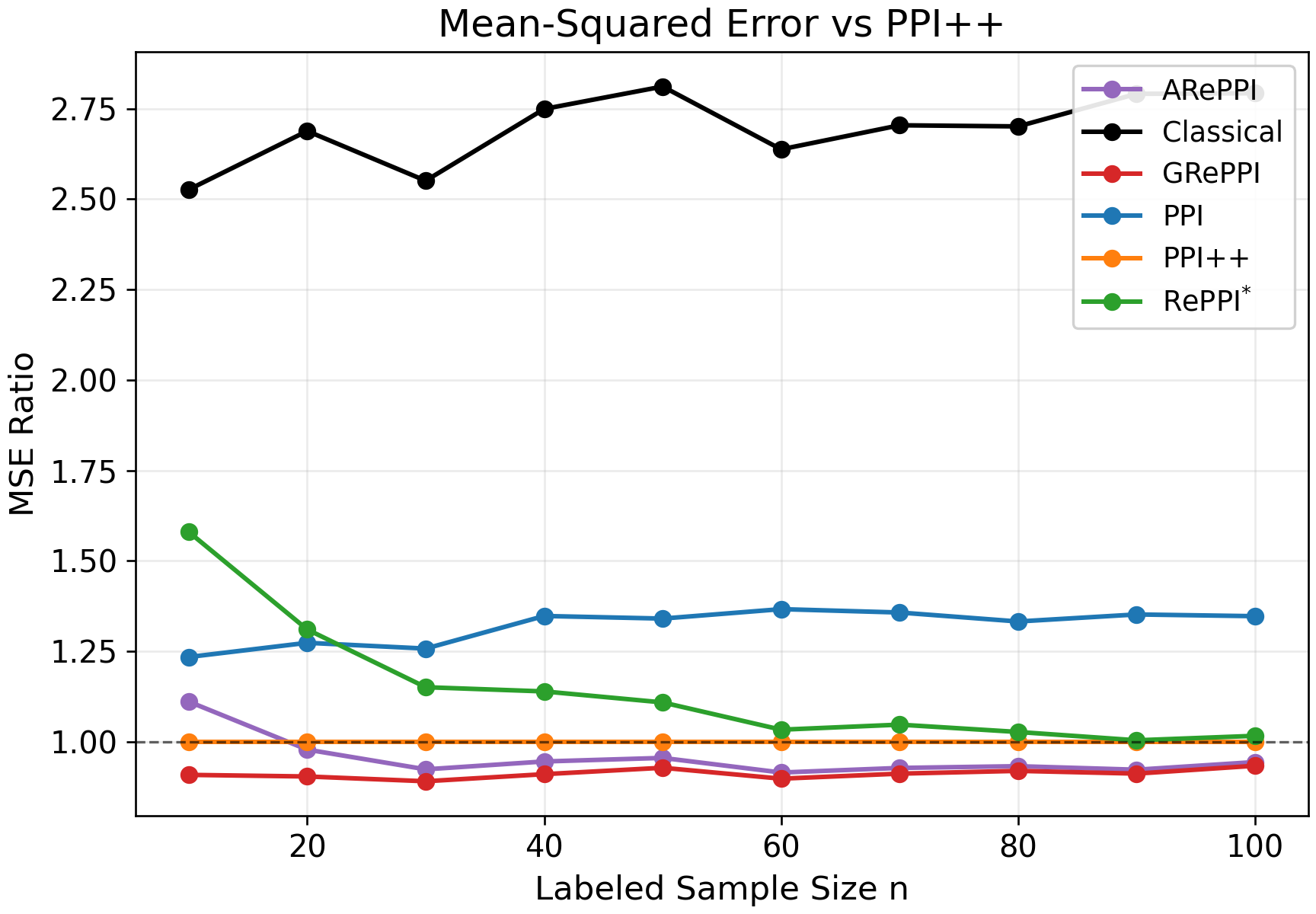}
        \caption{MSE}
        \label{fig:1-sub2}
    \end{subfigure}
\hfill
   \begin{subfigure}[t]{0.24\linewidth}
        \centering
        \includegraphics[width=\linewidth]{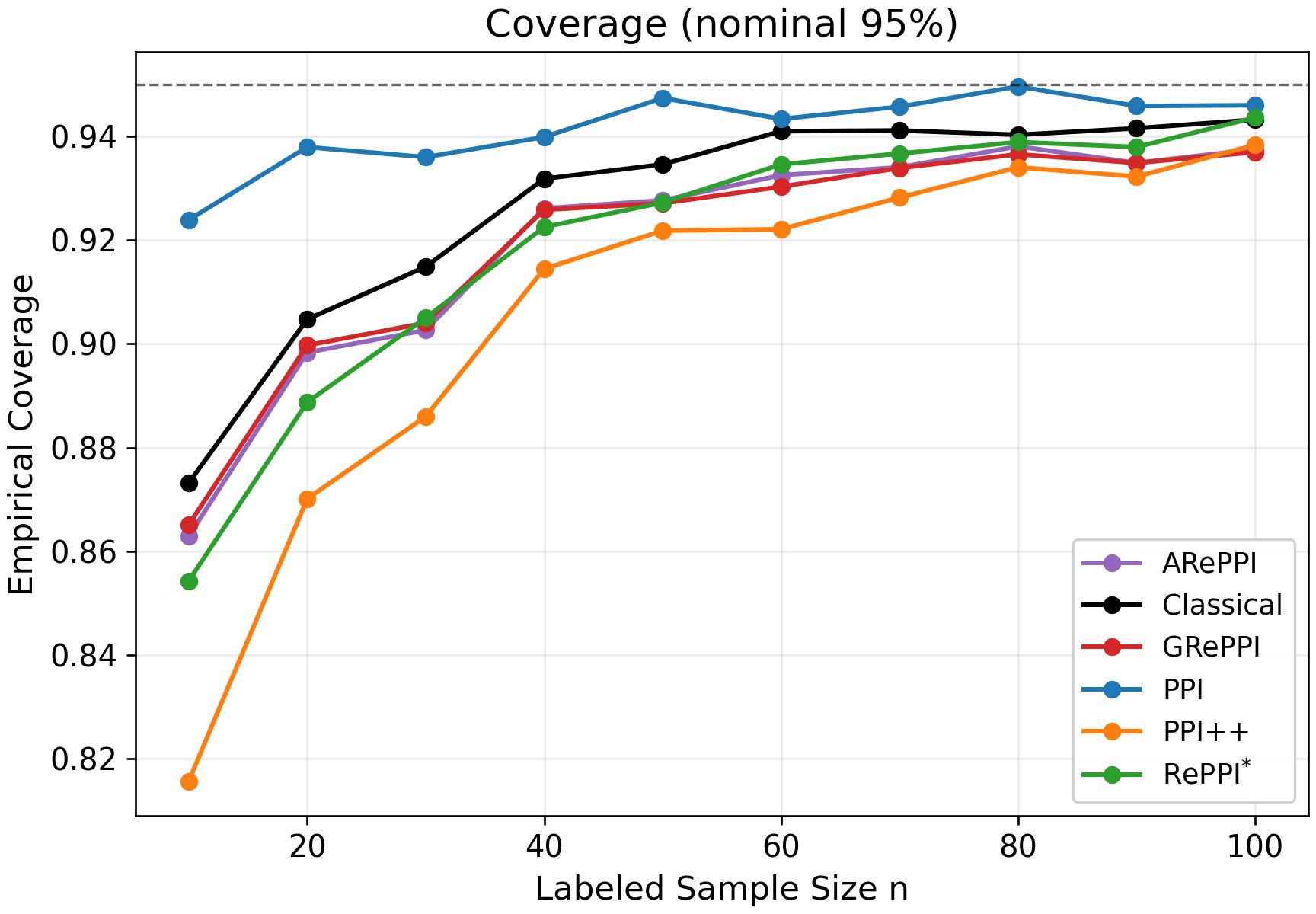}
        \caption{Coverage}
        \label{fig:1-sub3}
    \end{subfigure}
    \hfill
    \begin{subfigure}[t]{0.24\linewidth}
        \centering
        \includegraphics[width=\linewidth]{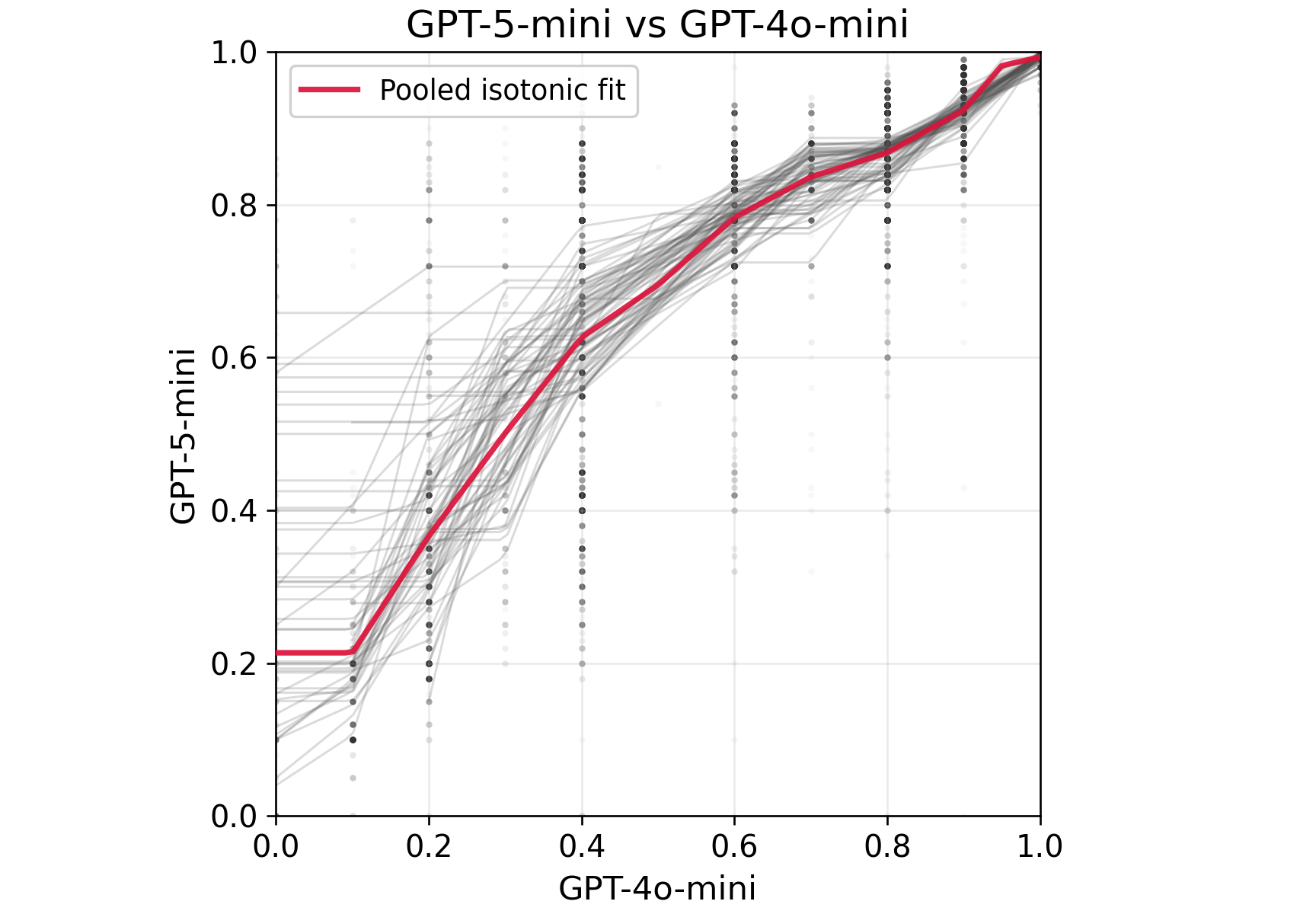}
        \caption{Relationship}
        \label{fig:1-sub4}
    \end{subfigure} 
    \hfill
    \caption{Semi-synthetic data experiment, with \gptfive as ground truth and \fouromini as the annotator. For plots (a)-(b), we treat \ppipp as the reference, and plot all other methods in a relative sense. We evaluate \classical, \ppipp, \reppi, \greppi and \areppi with local power tuning.}  
    \label{fig:main-synthetic}
\end{figure*}

We comment here on some requirements on $\mathcal{H}$. Motivated by the original insight of PPI, a first idea on how to leverage multi-task data could be to first compute a rectifier using all the labeled data $\labeled$, and then ``Rectifying the Rectifier'' using in task data $\labeled^\task$. Carrying out this calculation shows that the pooled rectifier cancels out again, leaving no benefits over standard \ppi. As we see below, using power tuning makes the requirements on the recalibration class $\mathcal{H}$ even more stringent. In fact, any affine transformation of the in-task surrogates $\hat{Y}^\taskindex$ can be shown to have no potential for oracle variance improvement. 

\begin{lemma}[Affine invariance]\label{lem:affine}
Let $s \in \mathcal{H}_{\mathrm{affine}}$, that is  $\hat{s}(\hat{Y}) = a\hat{Y} + b$ and let $\mathrm{id}: \hat{Y} \mapsto \hat{Y}$. Then, $V^\star(\hat{s}) = V^\star(\mathrm{id})$.
\end{lemma}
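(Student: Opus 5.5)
The plan is to argue directly from the closed form of the oracle power-tuned variance in Eq.~\eqref{eq:Vstar}. There, $V^\star(s) = \frac{1}{n}(1-\frac{n}{N}) S_Y^2 (1-\rho_N^2(Y,s))$, and the prefactor $\frac{1}{n}(1-\frac{n}{N})S_Y^2$ does not depend on $s$. The lemma therefore reduces to the claim $\rho_N^2(Y, a\hat{Y}+b) = \rho_N^2(Y,\hat{Y})$. This is the invariance of squared Pearson correlation under affine maps, applied to the finite-population moments $\Cov_N$ and $\Var_N$.

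The key steps, in order, are as follows.
\begin{enumerate}[(i)]
\item Using bilinearity of $\Cov_N$ and the fact that constants have zero finite-population covariance with anything, write $\Cov_N(Y, a\hat{Y}+b) = a\,\Cov_N(Y,\hat{Y})$.
\item In the same way, write $\Var_N(a\hat{Y}+b) = a^2\Var_N(\hat{Y})$.
\item Combine these: $\rho_N^2(Y,a\hat{Y}+b) = a^2\Cov_N(Y,\hat{Y})^2/(a^2\Var_N(\hat{Y})S_Y^2) = \rho_N^2(Y,\hat{Y})$.
\item Substitute into Eq.~\eqref{eq:Vstar}.
\end{enumerate}
As a sanity check, the same conclusion can be reached from the unminimized functional $V(s,\lambda)$. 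Any choice $\lambda$ for $s=a\hat{Y}+b$ yields the same estimator $\hat{\theta}_\lambda$ as the choice $a\lambda$ for $\mathrm{id}$, because the shift $b$ cancels between the two averages in the rectifier in Eq.~\eqref{eq:fp-estimator}. The two minimization problems over $\lambda\in\R$ therefore range over the same family of estimators, and their minima coincide. This second argument is more robust, and I may present it as the main proof.

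The main obstacle is the degenerate cases. When $a=0$, $s$ is constant and $\rho_N$ is undefined. In that case the power-tuned estimator reduces to the classical one, giving $V^\star = \frac{1}{n}(1-\frac{n}{N})S_Y^2$, which in general differs from $V^\star(\mathrm{id})$. I would therefore state that the lemma assumes $a\neq 0$, with the convention $\rho_N^2=0$ when $\Var_N(\hat Y)=0$. If $\hat Y$ is itself constant, both sides equal the classical variance. I would also note a normalization mismatch in the paper: the first display uses the factor $(1-\frac{n-1}{N-1})$, while Eq.~\eqref{eq:Vstar} uses $(1-\frac{n}{N})$ and $S_Y^2$. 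This is harmless here, since in either form the prefactor is independent of $s$ and the comparison only involves $\rho_N^2$.
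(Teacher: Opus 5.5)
Your proof is correct and follows the paper's argument: squared finite-population correlation is invariant under affine maps, and this is plugged into Eq.~\eqref{eq:Vstar}. Your $a\neq 0$ caveat matches the paper's implicit restriction to ``nonzero affine transformation''. One correction to your side remark: there is no normalization mismatch. Since $\sigma_Y^2=\frac{N-1}{N}S_Y^2$, we have $\bigl(1-\frac{n-1}{N-1}\bigr)\sigma_Y^2=\bigl(1-\frac{n}{N}\bigr)S_Y^2$, so the two prefactors are identical.
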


\vspace{-10pt}
\begin{proof}
Pearson correlation is invariant under nonzero affine transformation of either argument: $\Cov_N(Y, a \hat{Y} + b) = a\Cov_N(Y,\hat{Y})$ and $\Var_N(a \hat{Y}+b) = a^2\Var_N(\hat{Y})$, so $\rho_N^2(Y, a\hat{Y}+b) = \rho_N^2(Y, \hat{Y})$. Plugging into~\eqref{eq:Vstar} gives $V^\star(\hat{s}) = V^\star(\mathrm{id})$.
\end{proof}
The following corollary (with proof in \Cref{appendix:nonlinear-necessity}) strengthens this observation: any improvement over the identity surrogate must come from \emph{nonlinear} structure in the proxy--outcome relationship, whenever power-tuning acts only locally.

\begin{corollary}[Nonlinear necessity]\label{cor:nonlinear}
Let $m(z) \defeq \E_N[Y \sep \hat{Y} = z]$ denote the (finite-population) regression of $Y$ on the proxy. For any measurable $\phi:\R\to\R$, strict improvement $V^\star(\phi) < V^\star(\mathrm{id})$ is possible if and only if $m$ is not affine on the population support $\{\hat{Y}_i:i\in[N]\}$. The infimum is attained by $\phi^\star(z) = m(z)$, and the maximum achievable gain is
\begin{equation}\label{eq:max-gain}
    V^\star(\mathrm{id}) - \inf_{\phi} V^\star(\phi) \;=\; \frac{1}{n}\left(1-\frac{n}{N}\right)S_Y^2 \bigl(R^2_{Y\sim \hat{Y}} - \rho_N^2(Y, \hat{Y})\bigr),
\end{equation}
with $R^2_{Y\sim \hat{Y}} := \Var_N(\E_N[Y\sep \hat{Y}])/\Var_N(Y)$ being the finite-population (nonparametric) regression $R^2$.
\end{corollary}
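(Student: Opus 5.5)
Since $V^\star(\phi)$ in \eqref{eq:Vstar} depends on $\phi$ only through $\rho_N^2(Y,\phi(\hat Y))$, with the prefactor $\frac{1}{n}(1-\frac{n}{N})S_Y^2$ not depending on $\phi$, the plan is to reduce the whole statement to maximizing $\rho_N^2(Y,\phi(\hat Y))$ over measurable $\phi$. All expectations are taken under the uniform (empirical) distribution on the $N$ population points. Write $m(\hat Y)=\E_N[Y\sep\hat Y]$, and restrict to $\phi$ with $\Var_N(\phi(\hat Y))>0$; otherwise $\rho_N$ is undefined, or conventionally zero, which is never an improvement.

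\textbf{Step 1 (projection).} Because $\phi(\hat Y)$ is $\hat Y$-measurable, the tower property gives
\[
\Cov_N(Y,\phi(\hat Y))=\Cov_N(m(\hat Y),\phi(\hat Y)).
\]
Applying Cauchy--Schwarz to the centered variables then yields
\[
\rho_N^2(Y,\phi)=\frac{\Cov_N(m,\phi)^2}{\Var_N(Y)\Var_N(\phi)}\le \frac{\Var_N(m)}{\Var_N(Y)}=R^2_{Y\sim\hat Y}.
\]
Equality holds iff $\phi(\hat Y)$ is an affine function of $m(\hat Y)$ with nonzero slope on the support. In particular $\phi^\star=m$ attains the bound. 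The degenerate case $\Var_N(m)=0$ is handled separately: there $m$ is constant, hence affine, and every $\phi$ gives correlation $0$.

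\textbf{Step 2 (gain formula).} Substituting $\sup_\phi\rho_N^2=R^2_{Y\sim\hat Y}$ and $\rho_N^2(Y,\hat Y)$ into \eqref{eq:Vstar} and subtracting gives \eqref{eq:max-gain} directly.

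\textbf{Step 3 (iff).} By Step 2, strict improvement is possible iff $R^2_{Y\sim\hat Y}>\rho_N^2(Y,\hat Y)$. Applying the equality case of Step 1 to $\phi=\mathrm{id}$, equality holds iff $\hat Y$ is a nonzero-slope affine function of $m(\hat Y)$ on the support. Equivalently, $m$ is affine on $\{\hat Y_i\}$: either $m(z)=a z+b$ with $a\neq 0$, or $m$ is constant, in which case both sides vanish. Hence strict improvement is possible iff $m$ is not affine on the support.

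The main obstacle is bookkeeping in this last step. One must check that the equality condition in Cauchy--Schwarz, stated for centered variables on the finite support, translates exactly into affinity of $m$, including the constant-$m$ case and the case $\Var_N(\hat Y)=0$. One must also check that finite-population conditional expectations, defined by averaging $Y_i$ over ties in $\hat Y_i$, satisfy the tower identity used in Step 1. Both are straightforward once everything is phrased under the empirical measure on $[N]$.
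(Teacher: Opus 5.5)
Your proposal is correct and follows essentially the same route as the paper: its residual-orthogonality lemma (summing over the level sets $I_z=\{i:\hat Y_i=z\}$) is exactly your tower-property identity $\Cov_N(Y,\phi(\hat Y))=\Cov_N(m(\hat Y),\phi(\hat Y))$. The rest also matches the paper: the same Cauchy--Schwarz bound, the same equality case specialized to $\phi=\mathrm{id}$, and the same subtraction for the gain formula. Your explicit handling of constant $m$ in Step 3 is slightly more careful than the paper, whose stated equality condition $\hat Y_i = a\,m(\hat Y_i)+b$ does not literally cover that degenerate case, though the conclusion is unaffected.
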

We note that analogous results can readily be derived for the superpopulation setting.

\section{Case Study}
\label{sec:casestudy}

\subsection{{Experimental Setup: Demographic Steering of LLMs}}

\xhdr{Dataset} %
Our case study focuses on the dataset of~\citet{largescale2025cen} who queried 12 LLMs using a set of 572 base questions and 22 prompt variations daily over a period of roughly 4 months. Among all the queried LLMs in their dataset, we restrict our attention to the offline versions of three prominent models: GPT-4o, Claude-3.5-Sonnet, and Gemini-1.0-Pro. These constitute our set $\calM$. Among the total ``base'' questions in the dataset, we focus on 186 that directly pertain to the election processes and the candidates, and we exclude exit poll and prediction questions as they mostly elicit refusals from the LLMs; these 186 questions constitute our $\calQ$.

\begin{figure*}[t!]
\label{fig:main:synthetic}
    \centering
    \begin{subfigure}[t]{0.24\textwidth}
\includegraphics[width=\linewidth]{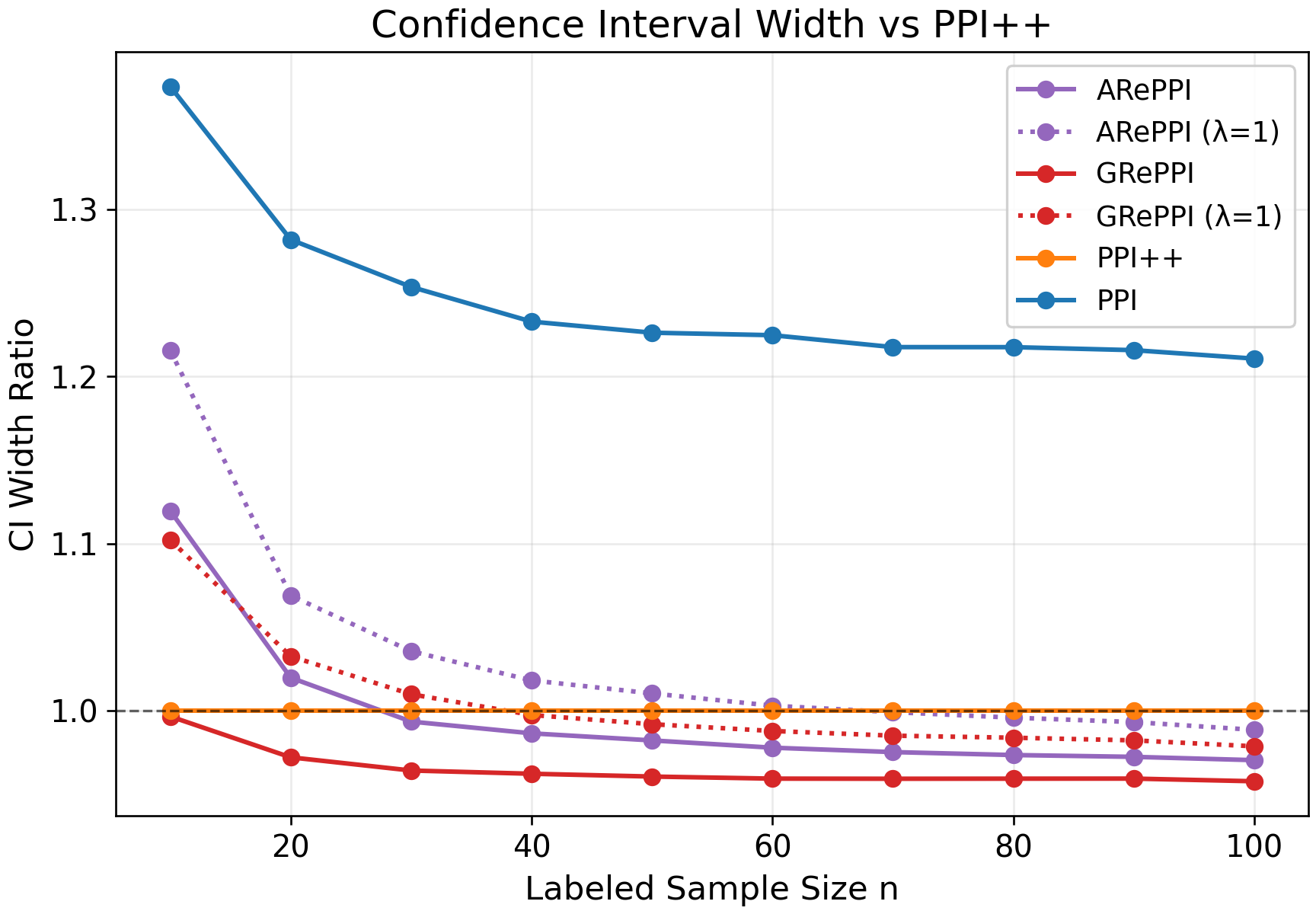}
        \caption{CI Width}
        \label{fig:2-sub1}
    \end{subfigure}
    \hfill
    \begin{subfigure}[t]{0.24\textwidth}
\centering\includegraphics[width=\linewidth]{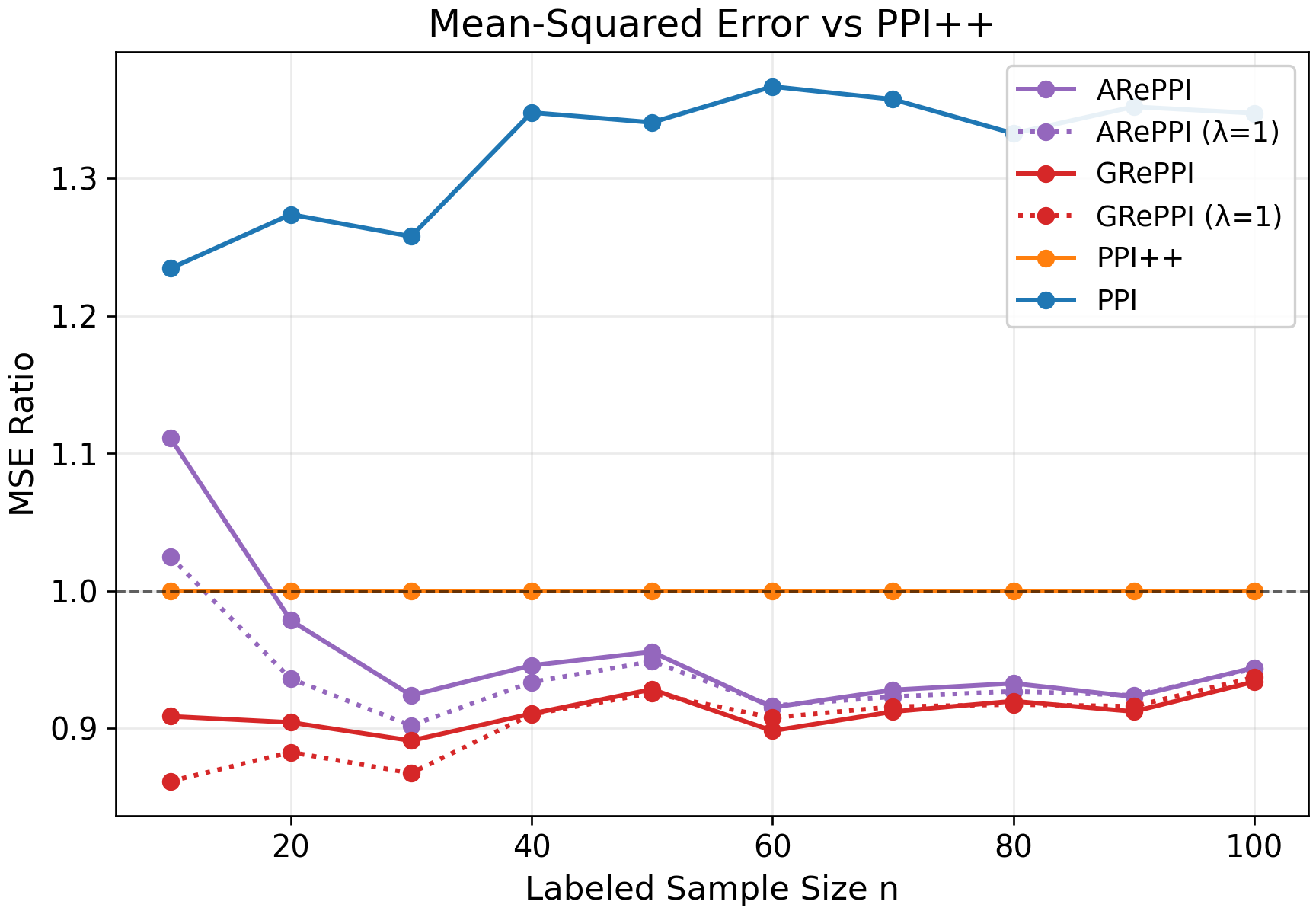}
        \caption{MSE}
        \label{fig:2-sub2}
    \end{subfigure}
\hfill
\begin{subfigure}[t]{0.24\textwidth}
\includegraphics[width=\linewidth]{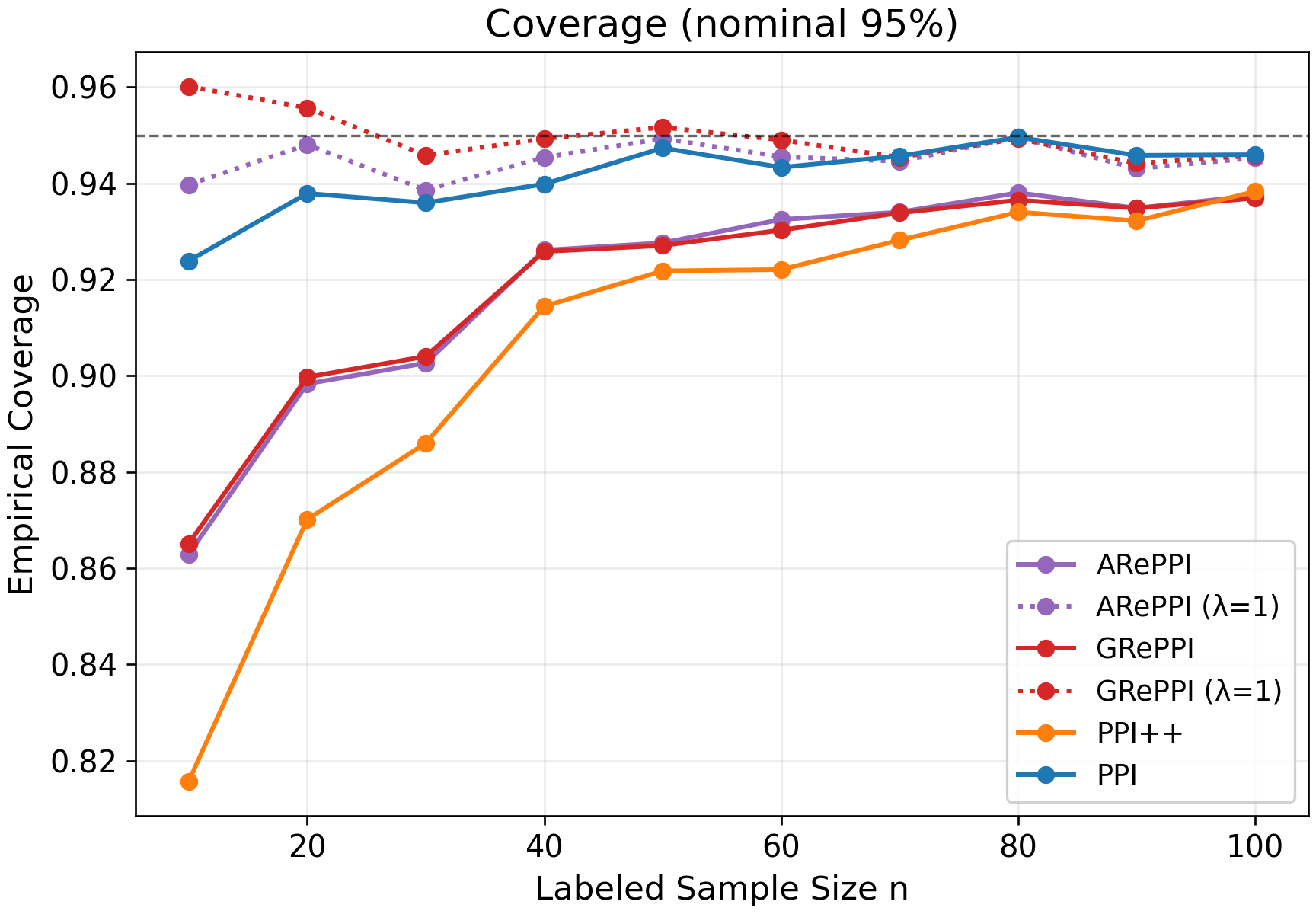}
        \caption{Coverage}
        \label{fig:2-sub3}
    \end{subfigure}
    \hfill
    \begin{subfigure}[t]{0.24\textwidth}
\centering\includegraphics[width=\linewidth]{new_figures/relationship/semi_synthetic.png}
        \caption{Relationship}
        \label{fig:2-sub4}
    \end{subfigure}
\hfill
    \caption{Same setup as Figure 2. We illustrate the effects of power tuning's underestimation of the variance at small sample sizes and effects on coverage. Figure (c) shows that power tuning has an adverse effect on coverage at the very small sample sizes we operate in, because the residuals underestimate the out-of-distribution residuals. However, Figure (b) shows that the effects on MSE is negligible when out-of-task recalibration is already doing a good job fine-tuning the surrogate predictions, suggesting that power-tuning is not necessary.}  
    \label{fig:main}
\end{figure*}

In the dataset of~\citet{largescale2025cen}, a base question $q \in \calQ$ can be modified by a prompt variation $v$ (e.g., prepending ``I am a Democrat.'') to form a complete query. From the original 22 variations, we isolate 2 distinct subsets to study demographic steering: $\mathcal{P}_\text{race}$, consisting of 6 racial identities plus the ``none'' (unmodified) variation, and $\mathcal{P}_\text{party}$ consisting of 2 political identities plus the ``none'' variation.

We define the full set of tasks (i.e., comparisons between two different prompt variations on a single model) by taking all pairwise comparisons within $\mathcal{P}_\text{race}$ and within $\mathcal{P}_\text{party}$, across all models in $\models$:
\begin{align*}
\tasks = \left\{(\model, v_1, v_2) : \model\in \models \land \left((v_1, v_2 \in \mathcal{P}_\text{race}) \lor (v_1, v_2 \in \mathcal{P}_\text{party})\right) \right\}.
\end{align*}
$|\tasks| = 72$, since there are $\binom{7}{2}=21$ pairs in $\mathcal{P}_\text{race}$,  $\binom{3}{2}=3$ pairs in $\mathcal{P}_\text{party}$, and 3 models to evaluate.

Finally, for a given task $\task$, our dataset consists of 186 data points corresponding to the base questions in $\calQ$. The $i$-th data point $X_i^\taskindex$ is formally denoted as $X_i^\taskindex = \left(q_i, v_1, v_2, \model, R_{\model, v_1, q_i}, R_{\model, v_2, q_i}\right)$, where $q_i \in \calQ$, and $R_{\model, v, q}$ is the fixed textual response generated by the model to the prompt characterized by the base question $q$ and the prompt variation $v$ within the task indexed by $\task \in \tasks$.

\xhdr{``Deep Meaning Similarity''} For each datapoint $X_i^\taskindex$, the ground-truth label $Y_i^\taskindex$ is the \emph{deep meaning similarity} between the two responses $R_{\model, v_1, q_i}$ and $R_{\model, v_2, q_i}$. We define this as a score in $[0,1]$, where 1 indicates that the responses are maximally similar in meaning and 0 indicates that they are maximally dissimilar.\footnote{What constitutes ``deep meaning similarity'' is fundamentally normative and remains debated in political communication, especially in work on framing and interpretation~\citep{entman1993framing,chong2007framing,walter2024meta}. Our goal is not to resolve this debate, but to operationalize a notion of similarity that captures not only factual content, but also tone, emphasis, and framing, while excluding superficial differences in wording, formatting, or style.} For each labeled datapoint, we elicit $M$ independent human annotations $Y_{i,m}^\taskindex$ and define the ground-truth label as their average, $Y_i^\taskindex = \frac{1}{M}\sum_{m=1}^M Y_{i,m}^\taskindex.$ In our study, we set $M=5$, obtaining five annotations per labeled response pair. To evaluate performance across different labeled-data regimes, we collect $n_\task=40$ labeled datapoints for each task $\task$. Since our case study contains 72 tasks, this yields $72 \times 40 \times 5 = 14{,}400$ total human annotations. We collected these annotations through a Prolific survey with 5{,}760 participants, each of whom annotated either two or three response pairs for about \$9/hr. The annotation instrument follows standard survey-design principles~\citep{groves2005-surveybook,tourangeau2000psychology}; full details appear in Appendix~\ref{app:case-study}. In addition to the semi-synthetic and human annotated datasets we evaluate our method on here, we include in Appendix~\ref{app:ablation} a synthetic data ablation showcasing how task heterogeneity affects the performance of these methods. All our experiments are run on an Apple M2 Pro 16GB.

\subsection{{Semi-Synthetic Results}}
In this section, we begin gaining intuition for our problem setting and evaluate the methods from Sec.~\ref{sec:methodology}. As we wish to be able to evaluate against a ground truth, and we do not have a fully labeled dataset of human ground truth, we will first conduct a semi-synthetic version of the case study using two pairs of \emph{annotator models}. We use \gptfive as ground truth $Y$, and use \fouromini to supply the surrogate prediction $\hat{Y}$. The exact prompts for obtaining the scores can be found in App.~\ref{app:semisyntheticscores}. We compare the following methods: (1) $\classical $ (i.e., standard mean estimation using only $\labeled^\taskindex$; (2) \ppi \citep{PPI-2023} (rectify on $\labeled^\taskindex$ with no power tuning) (3) $\ppipp $ \citep{PPIpp-2023} (power-tune and compute rectifier on $\labeled^\taskindex$ and use machine predictions on $\dataset^\taskindex$, i.e. \greppi using $\hat{s} = \mathrm{id}.$); (4) \reppi \citep{RePPI-2025} only on $\labeled^\taskindex$ using a two-way split, and using isotonic regression as the recalibration function class (see Appendix~\ref{appendix:cis:reppi} for a detailed account of our implementation, since it differs slightly from the original version meant for the superpopulation case\footnote{Which is why we denote it as \textsc{RePPI}$^*$ in our plots.}) (5) $\greppi$ (Algorithm~\ref{alg:loo-reppi}); and (6) $\areppi $ (Algorithm~\ref{alg:adaptive-reppi}). Both (5) and (6) use isotonic regression for $\mathcal{H}$. We report in  Fig~\ref{fig:main-synthetic} the (Studentized) Wald Confidence Interval Widths and Coverage and Average Mean Squared Errors (MSE) over $B=100$ draws of the labeled set for each task. Implementation details can be found in Appendix~\ref{app:implementation-details}. We generally observe that both CI width and MSE scales as we would expect (we can see in Fig.~\ref{fig:1-sub4} that the scores in fact have a nonlinear, monotone relationship): \greppi yields the lowest errors and the tightest confidence intervals, because it can exploit the conjectured structure. \areppi also exploits structure, but needs enough samples for the process to stabilize. Both methods beat \textsc{RePPI}, who doesn't leverage cross-task information and suffers in our very sparse data regime. \ppipp cannot exploit nonlinear dependencies learnable from the surrogate to ground-truth mapping. However, we also see that at small sample sizes, coverage suffers across all methods. In Fig.~\ref{fig:main}, we confirm that this is due to local power tuning, and show that fixing $\lambda=1$ improves coverage across the board. We also see that when the recalibration map learned from auxiliary tasks is good, power tuning is not necessary to get low MSE (in fact, the difference is very small). Since power-tuning makes variance estimation harder (due to overfitting), we do not use it for the human data study below.

\subsection{Results with Human Annotators}

We next turn our attention to human annotated data. On our real dataset, we cannot evaluate the ground truth, and therefore MSE and coverage are unobserved metrics. We will therefore only report CI widths in Figure~\ref{fig:main-3}. However, as seen above, since our sample sizes are so small ($n \leq 40$ for the human annotated data, due to the sheer cost of gathering ground truth annotations), power tuning makes variance estimation overly optimistic, as evidenced by the coverage plot Fig.~\ref{fig:2-sub1},~\ref{fig:2-sub3} on the semi-synthetic data. Therefore, CI width using power-tuning is a poor metric, and we choose to restrict to $\lambda=1$ in Figure~\ref{fig:main-3}. We see again the same hierarchy of methods as in the semi-synthetic experiment, with \greppi being particularly helpful in reducing CI widths, and \areppi catching up as more samples are labeled.

\vspace{-5pt}
\section{Discussion}\label{sec:discussion}

In this paper, we introduced a framework to improve power in PPI across many related tasks while keeping validity anchored in the inferential task of interest, motivated by settings in AI evaluation and social science research. Our methods leverage shared nonlinear structure in the proxy-ground-truth relationship across tasks to learn improved recalibrated surrogate outcomes, but keep power-tuning and debiasing task-specific. 

\xhdr{From CIs to Auditor Guidelines} Our methods provide (asymptotically) unbiased estimates and valid CIs for task-level quantities such as the average deep-meaning similarity between responses. Downstream auditing decisions and hypothesis tests typically require translating these continuous measurements into operational judgements. In our case study, an auditor may wish to determine whether a set of pairs of responses is ``meaningfully different'' in content, framing, or factual substance, which in turn requires specifying a similarity threshold below which responses are considered substantively different. Defining such thresholds is fundamentally a normative and application-dependent question: different stakeholders may reasonably disagree about what degree of semantic divergence warrants concern. Moreover, desirable properties of such thresholds, such as robustness, interpretability, calibration to human perception, or consistency across domains, remain largely unexplored. We therefore view the development of principled auditor guidelines and decision thresholds as an important direction for future interdisciplinary work spanning statistics, social science, HCI, and AI governance.

\section{Related Work}
\label{appendix:related-work}
\paragraph{Prediction Powered Inference}
Prediction-powered inference \citep{PPI-2023} (and the related DSL framework~\citep{egami2023using}) has recently been proposed to improve statistical power while retaining valid inference downstream of AI predictions. PPI++ \citep{PPIpp-2023} adds a power-tuning coefficient $\lambda$ that guarantees variance no worse than the classical labeled-only estimator asymptotically (see also \cite{miao:diagonal} for multivariate extensions of power-tuning). PPI has seen domain specific instantiations and extensions, as well as domain-agnostic extensions: We mention here briefly two papers that incorporate different sorts of multiplicity. \cite{StratPPI-2024} partition the data distribution into strata, and select separate power-tuning parameters between for stratum. \cite{MultiPPI-2026} study a setting with multiple available machine annotations in a budget constrained setting. Both of these works are orthogonal to our work: in the former, multiplicity arises within a single population, in the latter it arises through multiple annotators. For us, it arises through multiple tasks/populations/parameters.

\paragraph{Prediction-Powered Adaptive Shrinkage Estimation} \cite{li2025prediction} also consider a multi-task PPI setting, by using an Empirical Bayes procedure to shrink the power-tuning parameters. The key distinction is that shrinkage typically acts on the task-level estimates themselves, improving aggregate MSE but potentially compromising valid inference for any single task (robust confidence intervals are more difficult to obtain when Empirical Bayes shrinkage is involved \cite{armstrong2022robust}). Our methods target a different layer: we borrow information to learn a cross-task proxy recalibration, then keep task-specific rectification, preserving per-task confidence intervals. One can view these two approaches as orthogonal, and they can be readily combined. We view this as future work.

\paragraph{Surrogate Outcomes, Recalibration and Calibration}
 \cite{RePPI-2025} observe that prediction-powered inference with a recalibrated surrogate approaches a semiparametric efficiency bound, by drawing on the semiparametric inference literature \citep{Robins-1994, Chen2000AUA}, and prior PPI efficiency improvement proposals \citep{Gronsbell2024AnotherLA}. They propose a cross-fitted approach that fits a recalibration function, and applies rectification and power tuning to different folds\footnote{In the general M-estimation setting, their algorithm requires a three-way split. In mean estimation, a two-way split suffices.}. We build on their works by exploiting both power-tuning and recalibration, but exploiting the clear separation between our tasks, yielding natural data splitting schemes. In concurrent work to ours, \cite{calibeating} introduce the idea of calibrating the AI predictions on the \emph{outcome scale} to "calibeat" PPI. This work is related to \cite{RePPI-2025} but differs in two ways. \emph{Firstly}, they focus on recalibration only on the outcome scale, that is, the recalibrator is a map from $\hat{Y} \mapsto Y$ and not $X, \hat{Y} \mapsto Y$. We also follow this approach, but learn the recalibrator on auxiliary data; which is crucial in our sparse data regime with related tasks.  \emph{Secondly}, \cite{calibeating} differs from \cite{RePPI-2025} and our work in that they emphasize asymptotic theory without data splitting. Thus, \cite{calibeating}'s work is similar in approach (recalibrating on the outcome scale only and using isotonic regression) as well as some specific results (recognising the importance of nonlinear recalibration to improve on PPI++\footnote{In their work, recognising that linear calibration is first-order equivalent to \ppipp}), but different in setting (multi-task vs. single-task) and resulting statistical procedures. We note that in our small data regime, recalibration within the task of interest has not proven to be an effective choice, and cross-task borrowing is crucial.

\paragraph{Survey Sampling} While in the conception of this work, we were inspired by the rapidly growing literature on prediction powered inference, the working model of finite population sampling that reflects how ML benchmarks can be evaluated using surrogate information closely resembles that of design-based model-assisted survey sampling.\footnote{We note here that the recalibration insights that we propose are applicable to either sampling regime (c.f. Appendix~\ref{appendix:superpopulation}), with slightly adjusted power-tuning parameters and different variance estimators for CI implementation.} Survey Sampling \citep{Sarndal2003ModelAS} is concerned with drawing conclusions about a finite population using ground-truth data observed on a subset of said population. \cite{Mozer2026PPIIT} and \cite{kluger} recently remarked the connections between survey sampling PPI estimators; in particular, for the special case of mean estimation, PPI is algebraically equivalent to the difference estimator dating back to at least \citep{Cassel1976SomeRO} and PPI++ takes the form of the GREG estimator \citep{Cassel1976SomeRO, sarndal_design_vs_model} covered in depth in the book \cite{Sarndal2003ModelAS}. While this literature has not been concerned with using AI predictions, much of the statistical machinery is shared.

\paragraph{Small Area Estimation}  Small area estimation (SAE) is concerned with making inferences and predictions about subdomains of a population when labeled data for such sub-populations are scarce, typically in geographical settings \citep{raomolinabook}. While our setting is different in instantiation of each component (the nature of the auxiliary data, the nature of the sub-domains / tasks, and the choice of globally learned model to boost power), SAE and our work share their primary motivation: using data from other domains to improve inference within the domain of interest. Approaches that aim to "borrow strength" (also called \emph{indirect} methods \citep{lehtonenchapter}) from other domains can be categorized in either model-based, or design-based. Model-based approaches use an explicit model to generalize inferences from other domains to the domain of interest \citep{Fay1979EstimatesOI, raomolinabook}. These methods usually have lower variance, but when the model is misspecified, they may be biased. These ideas have been explored in applications to AI evaluation \citep{fogliato2024precise}. On the other hand, design-based approches aim to be unbiased or at least consistent, regardless of the misspecification of the working model, as is our aim here. Standard approaches such as indirect GREG differ from our work. Indeed, an indirect GREG estimator would amount to using global information for the power-tuning stage, with no recalibration step (c.f. \cite{lehtonenchapter}, Section 4.1.1.). Our usage of indirect information is different, partially because our auxiliary data is already a highly correlated AI prediction of the target variable: we use local power-tuning (corresponding to a direct GREG estimator), but use a non-parametric recalibrator learned from indirect (global) information. 
Finally, we note that approaches exist that can be situated in between the design- and model-based methodologies; for instance \cite{betsy} take a conformal prediction approach to SAE that always maintains frequentist coverage, and is volume-optimal when the working model inducing the conformity measure is well-specified (however, they do not make use surrogate information or predictions).

\newpage

\bibliographystyle{plainnat}
\bibliography{clean_references}

\appendix

\newpage
\section{Missing Derivations and Proofs}
\label{appendix:proofs}
\subsection{Derivation of Oracle Variance $V(s,\lambda)$}
\label{appendix:oracle-variance}

We derive the variance functional used in \Cref{sec:power-tuning}.  Fix a recalibration map $s$ and write the shorthand $s_i := s(X_i)$. Let $\lambda \in \R$ be the scalar power-tuning coefficient.
Using the shorthands
$$
\qquad
\bar Y_\labeled := \frac{1}{n}\sum_{i\in\labeled}Y_i,
\quad
\bar s_\labeled := \frac{1}{n}\sum_{i\in\labeled} s_i,
\quad
\bar s_N := \frac{1}{N}\sum_{i \in [N]} s_i,
$$
the power-tuned estimator from~\eqref{eq:fp-estimator} reads as
$$
\thetahat_\lambda
=
\bar Y_\labeled + \lambda(\bar s_N - \bar s_\labeled),
$$
and the target parameter is the finite-population mean $\thetastar = \bar Y_N = N^{-1}\sum_{i \in [N]} Y_i$. 

\xhdr{Characterizing Estimation Error} 
Subtracting $\thetastar = \bar Y_N$ from $\thetahat_\lambda$ and grouping the labeled and full-population contributions,
\begin{equation*}
\thetahat_\lambda - \thetastar
\;=\; \bigl(\bar Y_\labeled - \lambda\bar s_\labeled\bigr) - \bigl(\bar Y_N - \lambda\bar s_N\bigr)
\;=\; \frac{1}{n}\sum_{i\in\labeled}(Y_i-\lambda s_i) - \frac{1}{N}\sum_{i \in [N]}(Y_i-\lambda s_i).
\end{equation*}
In other words, the estimation error is proportional to how accurately the labeled samples of $Y - \lambda s$ reflect the full finite-population. Let us denote by $Z_i := Y_i - \lambda s_i$. Let us take the expectation $\E$ and variance $\Var$ operators to be over the randomness in the draw of the subset $\labeled$. Clearly, $\E[\bar s_\labeled] = \bar s_N$ so that for \emph{fixed} $\lambda$ the estimator $\hat{\theta}_\lambda$ is unbiased in finite-samples, with its variance given by
\begin{equation*}
    V(s,\lambda) := \Var[\thetahat_\lambda] = \E[(\hat{\theta}_\lambda - \thetastar)^2] = \E\left[\left(\frac{1}{n}\sum_{i \in \labeled} Z_i - \frac{1}{N}\sum_{i \in [N]} Z_i\right)^2\right] = \Var[\Bar{Z}_{\labeled}].
\end{equation*}
This can be seen as the Horvitz-Thompson \citep{Horvitz1952AGO, Narain1951} estimator for the population mean of the random variables $Z$ under simple random sampling without replacement. 

\xhdr{Estimator Variance and FPC} Appealing to classical results on the Horvitz-Thompson estimator, (see, e.g. \cite{kwangsurveystatsbook}, Chapters 2 and 3) we can express
\begin{equation*}
    \Var[\Bar{Z}_{\labeled}] = \frac{1}{n}\left(1-\frac{n-1}{N-1}\right) \sigma_{Z}^2,
\end{equation*}
where $\sigma_{Z}^2 = \frac{1}{N}\sum_{i \in [N]} (Z_i - \bar{Z})^2$. This can also be rewritten as
\begin{equation*}
    \Var[\hat{Z}] = \frac{1}{n}\left(1-\frac{n}{N}\right) S_{Z}^2 \quad \text{ where } \quad S_{Z}^2 = \frac{1}{N-1}\sum_{i \in [N]} (Z_i - \bar{Z})^2
\end{equation*}
is the sample variance. We will approximate the latter with the plug-in estimate
\begin{equation*}
    \hat{S}_{Z}^2 = \frac{1}{n-1} \sum_{i \in \labeled} (Z_i - \bar{Z}_{\labeled})^2,
\end{equation*}
that satisfies $S_{Z}^2 = \E[\hat{S}_{Z}^2]$ under simple random sampling (e.g., \citep{kwangsurveystatsbook}, Chapter 3) to build confidence intervals (c.f.\ \ref{app:implementation-details}). 
 Recalling that we defined $Z = Y - \lambda s$, we can expand
\begin{align}
V(s,\lambda) = \Var[\Bar{Z}_{\labeled}]
&= \frac{1}{n}\left(1-\frac{n-1}{N-1}\right) \sigma_{Z}^2 \nonumber \\
&= \frac{1}{n}\left(1-\frac{n-1}{N-1}\right)\bigl[\sigma_Y^2 - 2\lambda\,\Cov_N(Y,s) + \lambda^2\,\Var_N(s)\bigr]. \label{eq:Vfp-app}
\end{align}
\xhdr{Optimal $\lambda$ and oracle variance} It is clear then that the optimal $\lambda$ should minimize the quantity $\sigma_Z^2$ (or equivalently $S_Z^2$).  The right-hand side of~\eqref{eq:Vfp-app} is quadratic in $\lambda$, with first-order condition $-2\Cov_N(Y,s) + 2\lambda\Var_N(s) = 0$. The minimizer is therefore
\[
\lambda^\star(s) \;=\; \frac{\Cov_N(Y,s)}{\Var_N(s)}.
\]
Substituting back and writing $\rho_N^2(Y,s) := \Cov_N(Y,s)^2 / (\Var_N(Y)\,\Var_N(s))$ for the finite-population squared correlation, we get
\[
V^\star(s)
    =
   \frac{1}{n} \left(1-\frac{n}{N}\right)
    S_Y^2 \bigl(1-\rho_N^2(Y, s)\bigr).
\]

\subsection{Derivations for the Superpopulation Case}
\label{appendix:superpopulation}
The variance expression in Appendix~\ref{appendix:oracle-variance} differs from the standard superpopulation
PPI++ expression because, in our finite-population sampling model, the
surrogate mean $\bar s_N$ is observed exactly. In the superpopulation model, this
is no longer true. Suppose that we observe an i.i.d.\ labeled sample
$$
(X_i,Y_i,\widehat Y_i,O_i)_{i=1}^n,
\qquad O_i=1,
$$
and an \emph{independent} i.i.d.\ unlabeled sample
$$
(X'_j,Y'_j,\widehat Y'_j,O'_j)_{j=1}^N,
\qquad O'_j=0,
$$
drawn from the same superpopulation. The outcomes $Y'_j$ in the unlabeled sample
are unobserved, while the surrogate predictions $\widehat Y'_j$ are observed. For
a fixed recalibration map $s$, write
$$
s_i=s(\widehat Y_i),
\qquad
s'_j=s(\widehat Y'_j).
$$
Consider the estimator
$$
\widehat\theta_\lambda
=
\bar Y_n+\lambda(\bar s'_N-\bar s_n),
$$
where
$$
\bar Y_n=\frac{1}{n}\sum_{i=1}^n Y_i,
\qquad
\bar s_n=\frac{1}{n}\sum_{i=1}^n s_i,
\qquad
\bar s'_N=\frac{1}{N}\sum_{j=1}^N s'_j.
$$
Since the labeled and unlabeled samples are independent,
$$
\Var(\widehat\theta_\lambda)
=
\frac{1}{n}\Var(Y-\lambda s)
+
\frac{\lambda^2}{N}\Var(s).
$$
Equivalently,
$$
V_{\rm sup}(s,\lambda)
=
\frac{1}{n}\Var(Y)
-
\frac{2\lambda}{n}\Cov(Y,s)
+
\lambda^2\left(\frac{1}{n}+\frac{1}{N}\right)\Var(s).
$$
Minimizing this quadratic in $\lambda$ gives
$$
\lambda^\star_{\rm sup}(s)
=
\frac{N}{n+N}\,
\frac{\Cov(Y,s)}{\Var(s)}.
$$
Plugging this value back into the variance formula yields
$$
V^\star_{\rm sup}(s)
=
\frac{1}{n}\Var(Y)
-
\frac{1}{n} \left(\frac{N}{n+N}\right)\frac{\Cov(Y,s)^2}
{\Var(s)}.
$$
Using
$$
\rho^2(Y,s)
=
\frac{\Cov(Y,s)^2}{\Var(Y)\Var(s)},
$$
this simplifies to
$$
V^\star_{\rm sup}(s)
=
\frac{\Var(Y)}{n}
\left(
1-\frac{N}{n+N}\rho^2(Y,s)
\right).
$$
Note the differences with $V^\star$ in Appendix~\ref{appendix:oracle-variance}: $V^\star$ includes a finite population correction (FPC) that makes the variance go to zero when $n$ approaches $N$ even in finite samples. This makes sense, since when $n=N$, in the finite-population setting, there is no randomness left. On the other hand, in the superpopulation case, when $N$ is not much larger than $n$, variance reduction has \emph{less} of an impact, since the correlation term gets downweighted. This tracks our intuition: if $N$ is not particularly large, then estimating $\bar s'_N$ introduces variance too. 

 See \cite{PPIpp-2023}, Example 6.1.\ for the original result in the superpopulation case. We note that these formulas allow easy implementation of our method in the superpopulation setting. What changes is the construction of the confidence intervals, as well as the power-tuning parameters. However, cross-task recalibration works just the same.

\subsection{Proof of Corollary~\ref{cor:nonlinear} (Nonlinear Necessity)}
\label{appendix:nonlinear-necessity}
We work in the finite-population setting of \Cref{appendix:oracle-variance}. By~\eqref{eq:Vstar},
\begin{equation}\label{eq:vstar-recall}
V^\star(s) = \frac{1}{n}\left(1-\frac{n}{N}\right)S_Y^2\bigl(1-\rho_N^2(Y, s)\bigr),
\end{equation}
so $V^\star$ depends on $s$ only through $\rho_N^2(Y,s)$. Throughout this proof we restrict to surrogates of the form $s_i = \phi(\hat{Y}_i)$\footnote{i.e. we do not take into account covariate information.} for a measurable $\phi : \R \to \R$, and write $V^\star(\phi)$ for the corresponding optimal oracle variance. Minimizing $\phi \mapsto V^\star(\phi)$ is then equivalent to maximizing $\rho_N^2(Y, \phi(\hat{Y}))$ over $\phi$.

\xhdr{Finite-population conditional expectation}
Let $\mathcal{Z} := \{\hat{Y}_i : i\in[N]\}$ denote the population support of
the proxy/surrogate and, for each $z\in\mathcal{Z}$, let
$I_z := \{i\in[N] : \hat{Y}_i=z\}$. Define
\begin{equation}
    m(z) := \E_N[Y_i \sep \hat{Y}_i = z] = \frac{1}{|I_z|}\sum_{i\in I_z} Y_i,\qquad z\in\mathcal{Z}.
\end{equation}
By construction, $\sum_{i\in I_z}\bigl(Y_i - m(z)\bigr) = 0$ for every
$z\in\mathcal{Z}$, and hence $\overline{m(\hat{Y})}_N = \bar Y_N$.

\begin{lemma}[Residual Orthogonality]\label{lem:fp-orth}
For every measurable $\phi:\R\to\R$,
\[
\Cov_N\bigl(Y - m(\hat{Y}),\; \phi(\hat{Y})\bigr) = 0,
\]
and consequently $\Cov_N(Y, \phi(\hat{Y})) = \Cov_N(m(\hat{Y}), \phi(\hat{Y}))$.
\end{lemma}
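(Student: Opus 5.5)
The plan is to write the finite-population covariance as an average over the level sets $I_z$, and to use that $\phi(\hat{Y}_i)$ is constant on each level set. Write $\Cov_N(A,B) = \frac{1}{N}\sum_{i\in[N]}(A_i - \bar A_N)(B_i - \bar B_N)$; if the paper's normalization is $1/(N-1)$ instead, this only rescales both sides by the same factor and does not affect the statement. Set $A_i = Y_i - m(\hat{Y}_i)$ and $B_i = \phi(\hat{Y}_i)$.

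The first step uses the remark preceding the lemma. We have $\overline{m(\hat{Y})}_N = \bar Y_N$, so $\bar A_N = 0$. Hence
\[
\Cov_N(A,B) = \frac{1}{N}\sum_{i\in[N]} A_i\,(B_i - \bar B_N) = \frac{1}{N}\sum_{i\in[N]} A_i B_i - \frac{\bar B_N}{N}\sum_{i\in[N]} A_i ,
\]
and the last term vanishes.

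The second step partitions $[N] = \bigsqcup_{z\in\mathcal{Z}} I_z$. On $I_z$ we have $B_i = \phi(z)$, which gives
\[
\sum_{i\in[N]} A_i B_i = \sum_{z\in\mathcal{Z}} \phi(z)\sum_{i\in I_z}\bigl(Y_i - m(z)\bigr) = 0 .
\]
This holds because each inner sum is zero by the definition of $m(z)$ as the level-set average. Since $\mathcal{Z}$ is finite, measurability of $\phi$ plays no role; only the fact that $\phi(\hat Y_i)$ is a function of $\hat Y_i$ matters.

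The consequence follows from bilinearity of $\Cov_N$:
\[
\Cov_N(Y,\phi(\hat Y)) = \Cov_N\bigl(m(\hat Y),\phi(\hat Y)\bigr) + \Cov_N\bigl(Y - m(\hat Y),\phi(\hat Y)\bigr),
\]
and the second term is zero by the above.

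There is no real obstacle. The only point needing care is the grouping step, which relies on every $i$ lying in exactly one $I_z$, and on $m$ being the exact empirical conditional mean on each $I_z$ rather than a smoothed regression.
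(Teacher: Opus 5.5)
Your proof is correct and follows essentially the same argument as the paper: use that $Y - m(\hat{Y})$ has population mean zero, partition $[N]$ into the level sets $I_z$ on which $\phi(\hat{Y}_i)$ is constant, and invoke $\sum_{i\in I_z}(Y_i - m(z)) = 0$. The differences are cosmetic: you split off the $\bar B_N$ term before grouping, whereas the paper keeps $\phi(z) - c$ inside the grouped sum, and you spell out the bilinearity step for the ``consequently'' clause, which the paper leaves implicit.
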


\begin{proof}
Write $c := \overline{\phi(\hat{Y})}_N$. Since $Y - m(\hat{Y})$ is centered,
\begin{align*}
\Cov_N\bigl(Y - m(\hat{Y}),\, \phi(\hat{Y})\bigr)
&= \frac{1}{N}\sum_{i \in [N]} \bigl(Y_i - m(\hat{Y}_i)\bigr)\bigl(\phi(\hat{Y}_i) - c\bigr)\\
&= \frac{1}{N}\sum_{z \in \mathcal{Z}} \sum_{i \in I_z} \bigl(Y_i - m(\hat{Y}_i)\bigr)\bigl(\phi(\hat{Y}_i) - c\bigr)\\
&= \frac{1}{N}\sum_{z \in \mathcal{Z}} \sum_{i \in I_z} \bigl(Y_i - m(z)\bigr)\bigl(\phi(z) - c\bigr)\\
&= \frac{1}{N}\sum_{z\in\mathcal{Z}} \bigl(\phi(z) - c\bigr)\sum_{i\in I_z}\bigl(Y_i - m(z)\bigr)
\;=\; 0,
\end{align*}
using $\sum_{i\in I_z}(Y_i - m(z)) = 0$ for every $z$. \qedhere
\end{proof}

We are now in a position to prove the corollary.

\begin{proof}[Proof of Corollary~\ref{cor:nonlinear}]
By Lemma~\ref{lem:fp-orth} and the covariance inequality
\begin{equation}\label{eq:cs-bound}
\Cov_N(Y, \phi(\hat{Y}))^2
= \Cov_N\bigl(m(\hat{Y}), \phi(\hat{Y})\bigr)^2
\;\le\;
\Var_N\bigl(m(\hat{Y})\bigr)\,\Var_N\bigl(\phi(\hat{Y})\bigr),
\end{equation}
with equality if and only if there exist $a,b\in\R$\footnote{This follows from Cauchy-Schwarz on the centered versions of these random variables.} such that
\begin{equation}\label{eq:affine-eq}
\phi(\hat{Y}_i) \;=\; a\, m(\hat{Y}_i) + b
\qquad\text{for every } i\in[N].
\end{equation}
(When both centered vectors are nonzero, equality forces $a\ne 0$; the
degenerate cases $\Var_N(\phi(\hat{Y}))=0$ or $\Var_N(m(\hat{Y}))=0$ make both sides
of~\eqref{eq:cs-bound} vanish and the bound trivially holds.) Dividing~\eqref{eq:cs-bound}
by $\Var_N(Y)\,\Var_N(\phi(\hat{Y}))$ and using
$R^2_{Y\sim \hat{Y}} = \Var_N(m(\hat{Y}))/\Var_N(Y)$ gives
\begin{equation}\label{eq:rho-bound}
\rho_N^2\bigl(Y, \phi(\hat{Y})\bigr) \;\le\; R^2_{Y\sim \hat{Y}},
\end{equation}
with equality under~\eqref{eq:affine-eq}. The choice $\phi^\star(z) = m(z)$ (taking $a=1, b=0$) attains the bound, so by~\eqref{eq:vstar-recall},
\begin{equation}\label{eq:opt-vstar}
\inf_\phi V^\star(\phi)
= \frac{1}{n}\left(1-\frac{n}{N}\right)S_Y^2\bigl(1 - R^2_{Y\sim \hat{Y}}\bigr),
\end{equation}
attained at $\phi = m$.

\xhdr{Equivalence with nonlinearity of $m$}
Specializing~\eqref{eq:rho-bound} to $\phi = \mathrm{id}$ yields $\rho_N^2(Y, \hat{Y}) \le R^2_{Y\sim \hat{Y}}$, with equality if and only if there exist $a,b\in\R$ such that $\hat{Y}_i = a\,m(\hat{Y}_i) + b$ for every $i\in[N]$. Equivalently, $m$ is affine on the population support $\mathcal{Z}$. Therefore strict improvement $\inf_\phi V^\star(\phi) < V^\star(\mathrm{id})$ is achievable if and only if $m$ is \emph{not} affine on $\mathcal{Z}$, which is the nonlinearity condition of the corollary.

\xhdr{Maximum achievable gain}
Combining~\eqref{eq:vstar-recall} for the identity recalibration
with~\eqref{eq:opt-vstar},
\[
V^\star(\mathrm{id}) \;-\; \inf_\phi V^\star(\phi)
= \frac{1}{n}\left(1-\frac{n}{N}\right)S_Y^2
   \bigl(R^2_{Y\sim \hat{Y}} - \rho_N^2(Y, \hat{Y})\bigr),
\]
which matches~\eqref{eq:max-gain} and completes the proof. \qedhere
\end{proof}

\newpage
\section{Implementation Details}
\label{app:implementation-details}

\subsection{\reppi (mean-estimation specialization)}
\label{appendix:cis:reppi}

\citet[Algorithm~1]{RePPI-2025} prescribe a three-way split of the labeled set into three folds: the first is used to compute an initial estimator $\hat\theta_0$ that pins down the linearization point for the conditional score; the second is used to fit the recalibrator $\hat s$; and the third together with the unlabeled data is used to compute the optimal control-variate matrix $\hat M$ (i.e. $\hat{\lambda}$ in one dimension) \emph{and} the rectified estimator. The procedure is then rotated over the three folds and the per-rotation estimators aggregated. The first fold is needed in the general $Z$-estimation setting because the optimal imputed loss $s^\star = \E[\nabla \ell_{\thetastar}(X,Y)\sep X, \hat Y]$ depends on $\thetastar$, which is unknown and must be replaced by an initial estimate $\hat\theta_0$. However, in the mean-estimation case considered in this paper, the conditional score factorizes as
\(
    \E[\nabla_\theta \ell_\theta(X,Y) \sep X, \hat Y] = \theta - \E[Y \sep X, \hat Y],
\)
so the dependence on $\theta$ is purely additive and can be isolated to give the estimator in closed form. Therefore, the recalibration target $\E[Y \sep \hat Y^, X]$ (in our case $\E[Y \sep \hat Y]$, since we do not model dependence on the underlying covariates) does not depend on any initial estimator. The fold reserved for $\hat\theta_0$ in \citet[Alg.~1]{RePPI-2025} can therefore be dropped, and the three-way split collapses to a two-way split $\labeled^\taskindex = A \cup B$. Another difference in our implementation is the finite-population nature of our task. In their paper, they assume independence of the labeled and unlabeled data. For us, the labeled data is a subset of the data on which the ML predictions get evaluated. Therefore, in order to not fit the recalibrator on any data on which it is evaluated in the final PPI estimator, we instead "stitch" together cross-fitted predictions. In practice, the difference from this adaptation was, however, seen to be negligible (indeed, the only (minor) difference between the resulting estimators is in how labeled points are handled in the sum ranging over $i \in [N]$). The exact procedure is given in Algorithm~\ref{alg:reppi-2fold}.

\begin{algorithm}[h!]
\caption{\reppi (\cite{RePPI-2025}, finite-population, mean-estimation adaptation)}
\label{alg:reppi-2fold}
\begin{algorithmic}[1]
\Require Task dataset $\dataset^\taskindex = (\hat Y_i^\taskindex, Y_i^\taskindex, O_i^\taskindex)_{i=1}^N$, recalibration class $\mathcal H$.
\State Randomly split $\labeled^\taskindex$ evenly into two folds $A$ and $B$.
\State  For each $F \in \{A, B\}$, fit $\hat s_F \in \mathcal H$ on $\{(\hat Y_i^\taskindex, Y_i^\taskindex)\}_{i \in F}$ so that $\hat s_F(\hat Y^\taskindex) \approx \E_N[Y^\taskindex \sep \hat Y^\taskindex]$.\Comment{collapses Steps 2--3 of \citet[Alg.~1]{RePPI-2025}.}
\State  Define the stitched cross-fitted predictions on the labeled set,
$w_i^\taskindex = \hat s_B(\hat Y_i^\taskindex)$ for $i \in A$ and $w_i^\taskindex = \hat s_A(\hat Y_i^\taskindex)$ for $i \in B$,
and compute the scalar power-tuning factor (analogue of $\hat M$ in Eq.~(9) of \citet{RePPI-2025})
$$
    \hat\lambda_{\labeled^\taskindex}
    \;=\;
    \frac{\Cov_{\labeled^\taskindex}(Y^\taskindex,\, w^\taskindex)}{\Var_{\labeled^\taskindex}(w^\taskindex)}.
$$
\State  On the unlabeled data, define the ensemble prediction $w_i^\taskindex = \tfrac12\!\left(\hat s_A(\hat Y_i^\taskindex) + \hat s_B(\hat Y_i^\taskindex)\right)$ for $i \in [N] \setminus \labeled^\taskindex$.
\State Return the rectified estimator
$$
    \hat\theta^\taskindex
    \;=\;
    \frac{1}{n}\!\sum_{i\in\labeled^\taskindex}\! Y_i^\taskindex
    \;-\;
    \hat\lambda_{\labeled^\taskindex}
    \!\left(
    \frac{1}{n}\!\sum_{i\in\labeled^\taskindex}\! w_i^\taskindex
    - \frac{1}{N}\!\sum_{i \in [N]}\! w_i^\taskindex
    \right)
$$
\end{algorithmic}
\end{algorithm}

\subsection{Confidence Intervals}
\label{appendix:cis:wald}
We collect here the per-method recipes used to form the studentized Wald confidence intervals reported in Section~\ref{sec:casestudy} and Appendix~\ref{app:ablation}. Throughout, $\alpha$ is the prescribed miscoverage level, $t_{n-1, 1-\alpha/2}$ is the $1-\alpha/2$ quantile of a Student-$t$ distribution with $n-1$ degrees of freedom, and each method outputs a point estimate $\hat\theta^\taskindex$ together with a residual sequence $(r_i^\taskindex)_{i \in \labeled^\taskindex}$ (to be defined below).
All methods share the same skeleton, and differ only in how the residual sequence $r_i^\taskindex$ used for the variance plug-in is constructed. Given the residuals, we build
\begin{equation}\label{eq:wald-skeleton}
    \widehat{V}(\hat\theta^\taskindex)
    \;=\;
    \frac{1}{n}\left(1 - \frac{n}{N}\right) \hat S_{r}^2,
    \qquad
    \hat S_{r}^2
    \;=\;
    \frac{1}{n - 1} \sum_{i \in \labeled^\taskindex} \big(r_i^\taskindex - \bar r^\taskindex\big)^2,
\end{equation}
and report the $1-\alpha$ Wald CI
\begin{equation}\label{eq:wald-ci}
    \mathrm{CI}_{1-\alpha}^\taskindex
    \;=\;
    \hat\theta^\taskindex \pm t_{n-1,\, 1-\alpha/2}\,\sqrt{\widehat{V}(\hat\theta^\taskindex)}.
\end{equation}
The form of $\widehat{V}$ in \eqref{eq:wald-skeleton} is the unbiased SRS-without-replacement variance estimator with a finite-population correction (c.f. Appendix~\ref{appendix:oracle-variance} and \citep{kwangsurveystatsbook}, Chapter~3), applied to the residual sequence specified per method below. All power tuning coefficients $\hat{\lambda}$ are clipped to $[0,1]$ in our implementation.

The residual $r_i^\taskindex$ for each method is (with $\hat\lambda_{\labeled^\taskindex}$ being either the local power tuning coefficient, or a constant $1$, depending on the plot in question):
\begin{itemize}
    \item \classical: $r_i^\taskindex = Y_i^\taskindex$.
    \item \ppi: $r_i^\taskindex = Y_i^\taskindex - \hat Y_i^\taskindex$ (c.f. \citep{PPI-2023}) 
    \item \ppipp: $r_i^\taskindex = Y_i^\taskindex - \hat\lambda_{\labeled^\taskindex}\, \hat Y_i^\taskindex$ (c.f. \cite{PPIpp-2023}).
    \item \reppi: $r_i^\taskindex = Y_i^\taskindex - \hat\lambda_{\labeled^\taskindex} w_i^\taskindex$ (c.f. Algorithm~\ref{alg:reppi-2fold}, our baseline adaptation of \cite{RePPI-2025}).
    \item \greppi: $r_i^\taskindex = Y_i^\taskindex - \hat\lambda_{\labeled^\taskindex}\, \hat s^\excepttask(\hat Y_i^\taskindex)$ (c.f. Algorithm~\ref{alg:loo-reppi})
    \item \areppi: $r_i^\taskindex = Y_i^\taskindex - \hat\lambda^{\mathrm{oof}}_{\labeled^\taskindex}\, u_i^\taskindex$ (c.f. Algorithm~\ref{alg:adaptive-reppi}).
\end{itemize}

A few remarks. The SE plug-ins above treat $\hat\lambda$ and $\hat s$ as fixed, resulting in bias in small samples, and we empirically diagnose its small-$n$ effect on coverage in Fig.~\ref{fig:main}~(a)--(d) when power tuning is used. We note in the experiments that power tuning is often not necessary if tasks are sufficiently homogenous or \areppi is used.

\section{Ablations and Synthetic Experiments}
\label{app:ablation}
We complement the real-data experiments with a controlled synthetic ablation designed to isolate the effect of task heterogeneity. The synthetic dataset is based on the family of monotone functions
\[
    f_p(x) = x^p, \qquad p \in [p_{\min}, 10],
\]
where changing $p_{\min} > 0$ controls the diversity of the auxiliary task population. Small values of $p_{\min}$ induce a highly heterogeneous collection of tasks: functions with $p \approx 0$ rise sharply near zero, while functions with large $p$ remain nearly flat over most of the domain and increase only close to one. As $p_{\min}$ increases, the task family becomes increasingly homogeneous, with most tasks sharing the same qualitative shape.

Figures~\ref{fig:main:synthetic_heterogeneous} and~\ref{fig:main:synthetic_homogeneous} summarize this ablation. In each figure, the top row visualizes the induced distribution of prediction--label relationships for each choice of $p_{\min}$, together with the (local and pooled) isotonic regression fits. When $p \in [1/10,10]$, the pooled fit (which will be very similar to the leave-one-out fit as well as the ) represents an average over substantially different monotone shapes and therefore does not match any single task particularly well. As the range narrows to $p \in [3,10]$ and further to $p \in [8,10]$, the task family concentrates around steep, high-curvature functions, and the pooled isotonic fit becomes much closer to the common task geometry.

The bottom rows report the MSE of \areppi, \greppi and \reppi with $\lambda=1$, normalized by the MSE of PPI, as a function of the number of labeled examples $n$. We make a few observations. We observe that the advantage of adaptive recalibration (\areppi) is most pronounced in the heterogeneous regimes. When $p_{\min}$ is small, auxiliary tasks vary widely in shape, so methods that can select or reweight useful auxiliary information improve steadily with $n$, while less adaptive aggregation remains bounded away from the best attainable performance (we note that estimation is still unbiased though). We note here that more complex ways to learn from auxiliary information may be viable. \areppi corresponds to a simple linear interpolation of a local and global recalibration function. Adaptively discovering which tasks are more similar may be of future interest. On the other hand, as the auxiliary population becomes more homogeneous, the gap between auxiliary-task methods changes: pooled or broadly shared information becomes less harmful because the auxiliary tasks are more mutually compatible, and the best methods approach very low relative MSE with fewer labeled examples. 

We conclude that when the auxiliary tasks are diverse, indiscriminately pooling them can introduce variance, because the predictions become worse for the target task. In this setting, methods that use (only) the labeled target data to identify relevant auxiliary structure achieve much larger gains (such as \reppi or \areppi, that achieves the same results adaptively). When the auxiliary tasks are already similar, the problem becomes easier: most auxiliary tasks provide useful information, and even simple transfer strategies perform well at very small sample sizes, the regime we are interested in here, and the ideal condition for \greppi.

\begin{figure}[h!]
    \centering

    \begin{subfigure}[t]{0.31\textwidth}
        \centering
        \includegraphics[width=\linewidth]{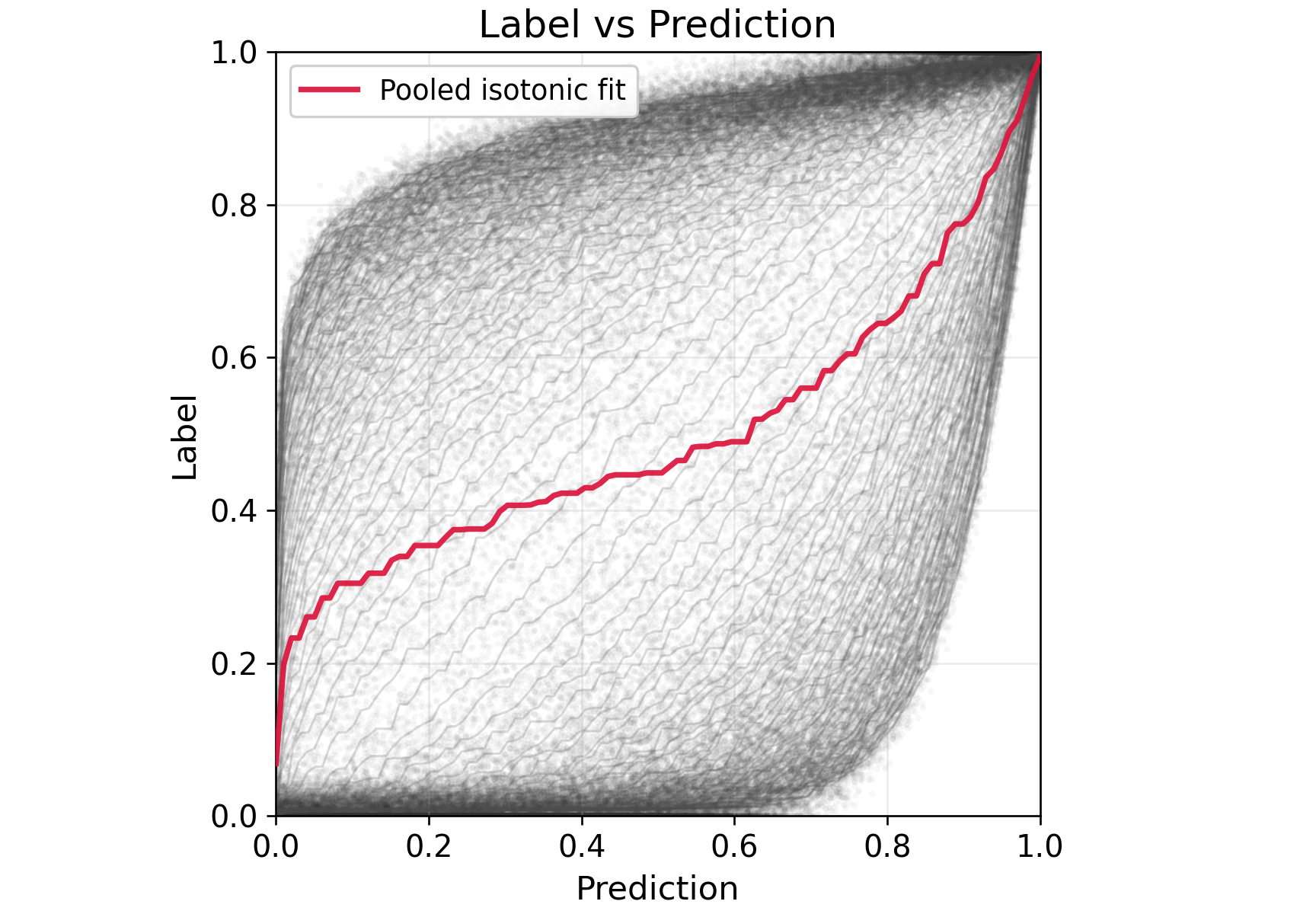}
        \caption{$p \in [1/10, 10]$}
        \label{fig:synthetic_pred_p0p1}
    \end{subfigure}
    \hfill
    \begin{subfigure}[t]{0.31\textwidth}
        \centering
        \includegraphics[width=\linewidth]{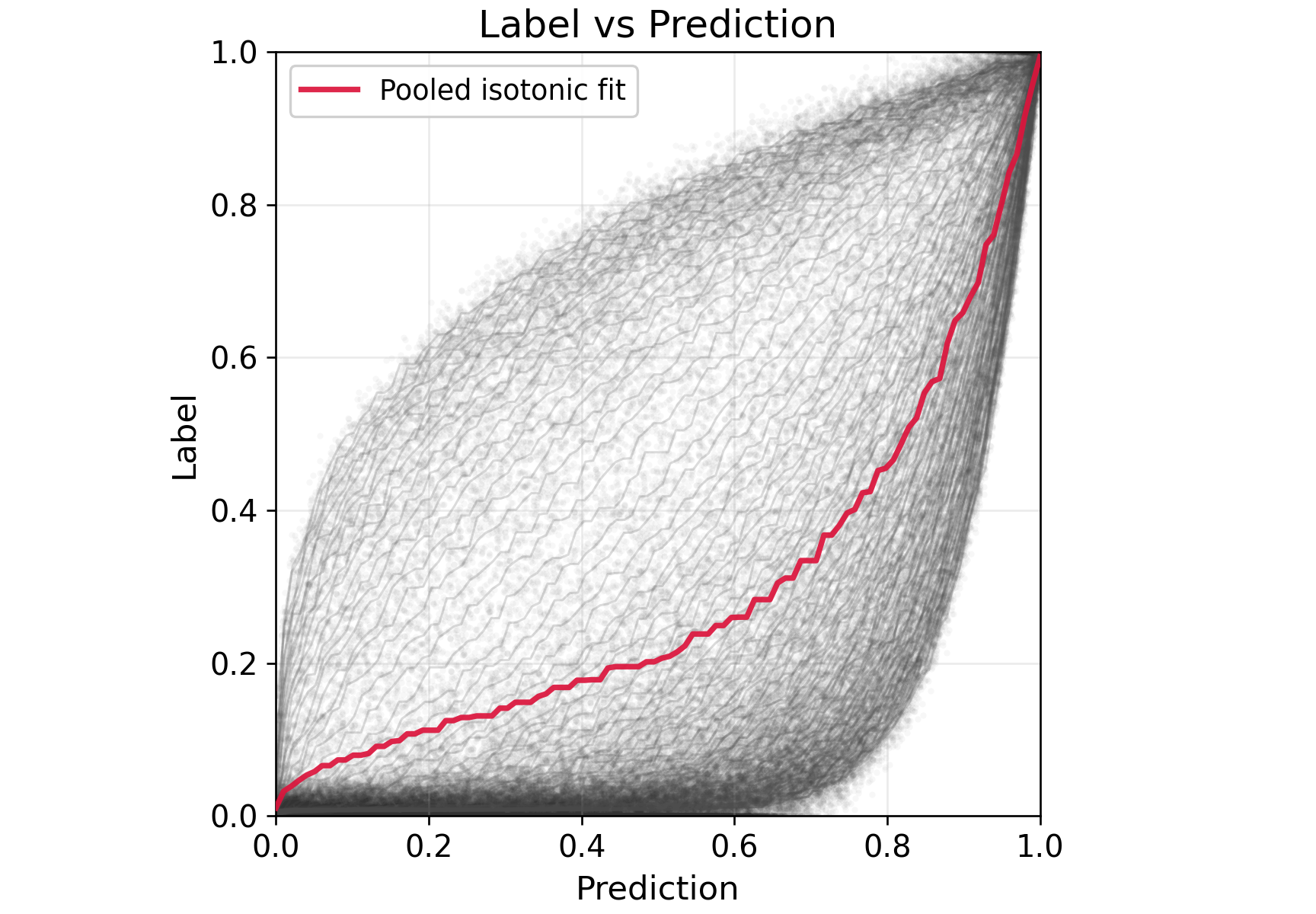}
        \caption{$p \in [3/10, 10]$}
        \label{fig:synthetic_pred_p0p3}
    \end{subfigure}
    \hfill
    \begin{subfigure}[t]{0.31\textwidth}
        \centering
        \includegraphics[width=\linewidth]{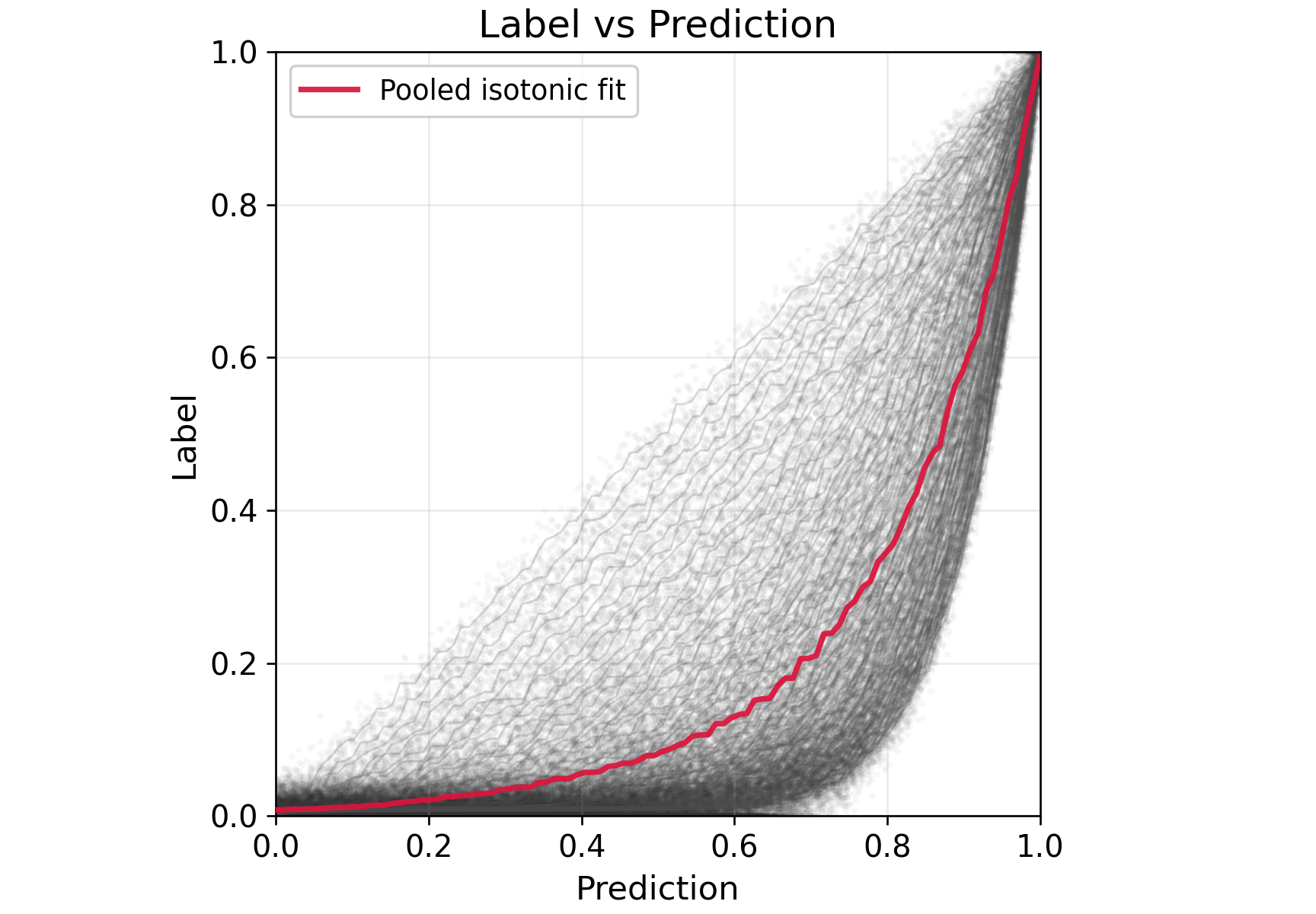}
        \caption{$p \in [1, 10]$}
        \label{fig:synthetic_pred_p1}

    \end{subfigure}

    \vspace{0.75em}

    \begin{subfigure}[t]{0.31\textwidth}
        \centering
        \includegraphics[width=\linewidth]{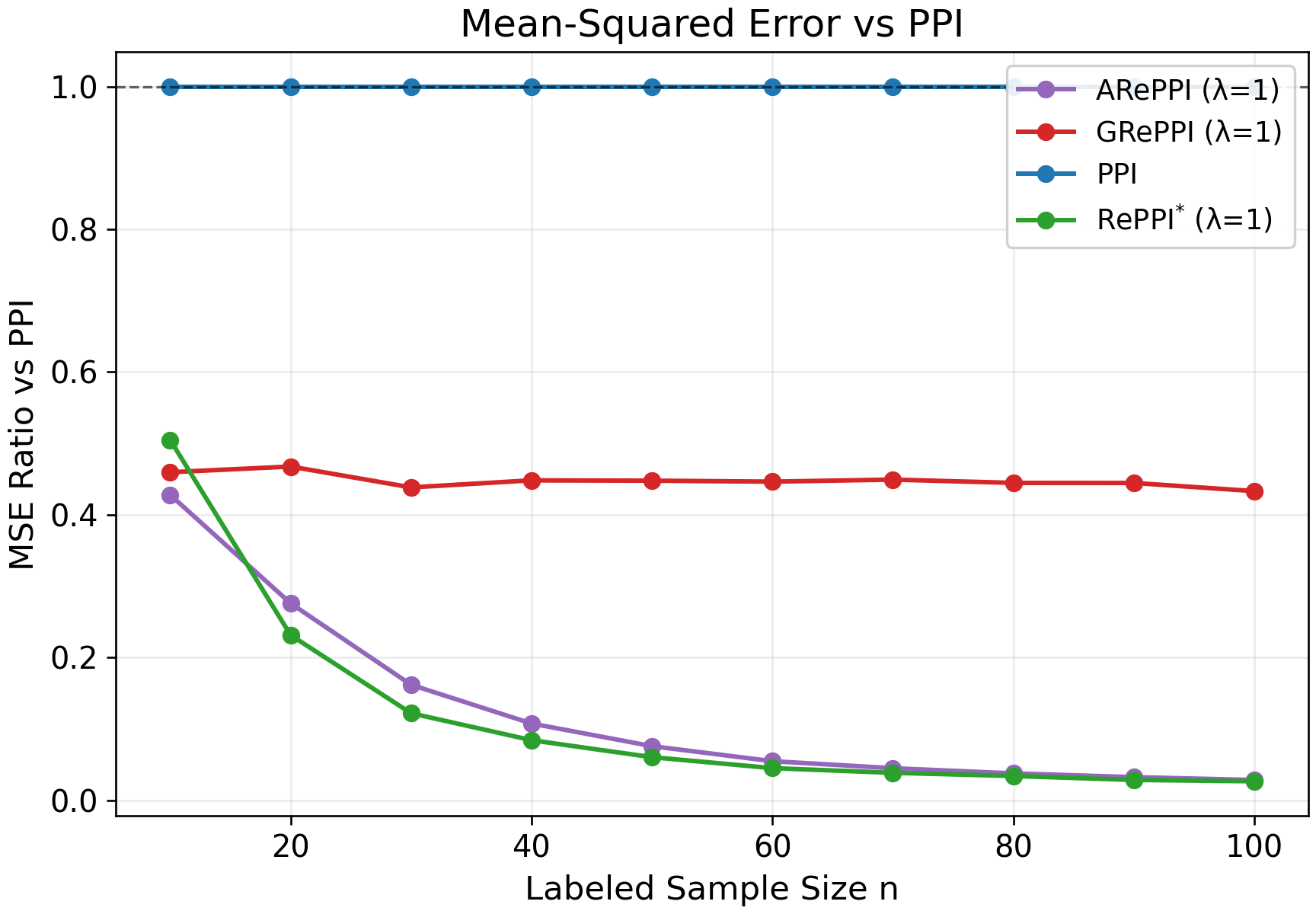}
        \caption{$p \in [1/10, 10]$}
        \label{fig:synthetic_mse_p0p1}
    \end{subfigure}
    \hfill
    \begin{subfigure}[t]{0.31\textwidth}
        \centering
        \includegraphics[width=\linewidth]{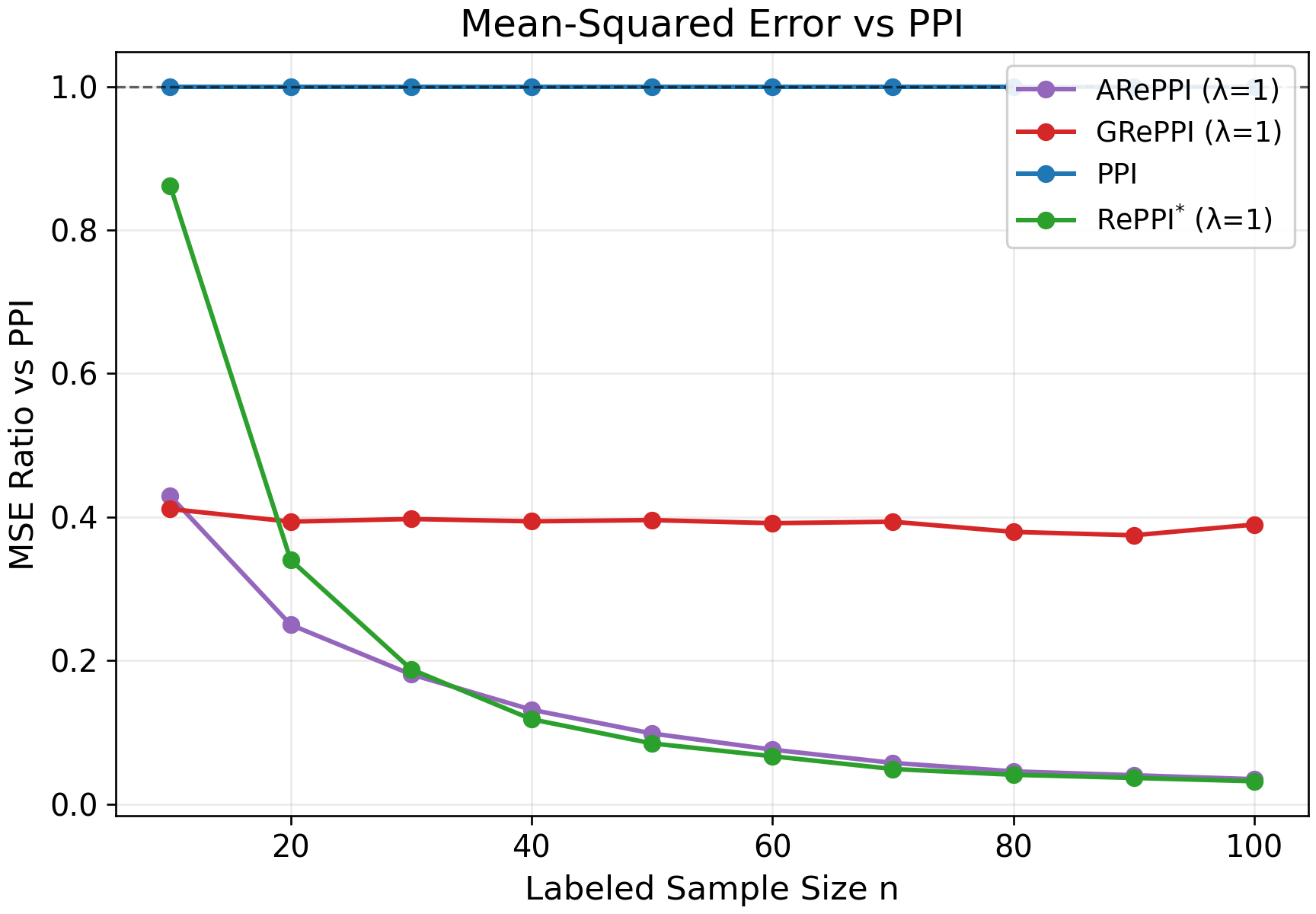}
        \caption{$p \in [3/10, 10]$}
        \label{fig:synthetic_mse_p0p3}
    \end{subfigure}
    \hfill
    \begin{subfigure}[t]{0.31\textwidth}
        \centering
        \includegraphics[width=\linewidth]{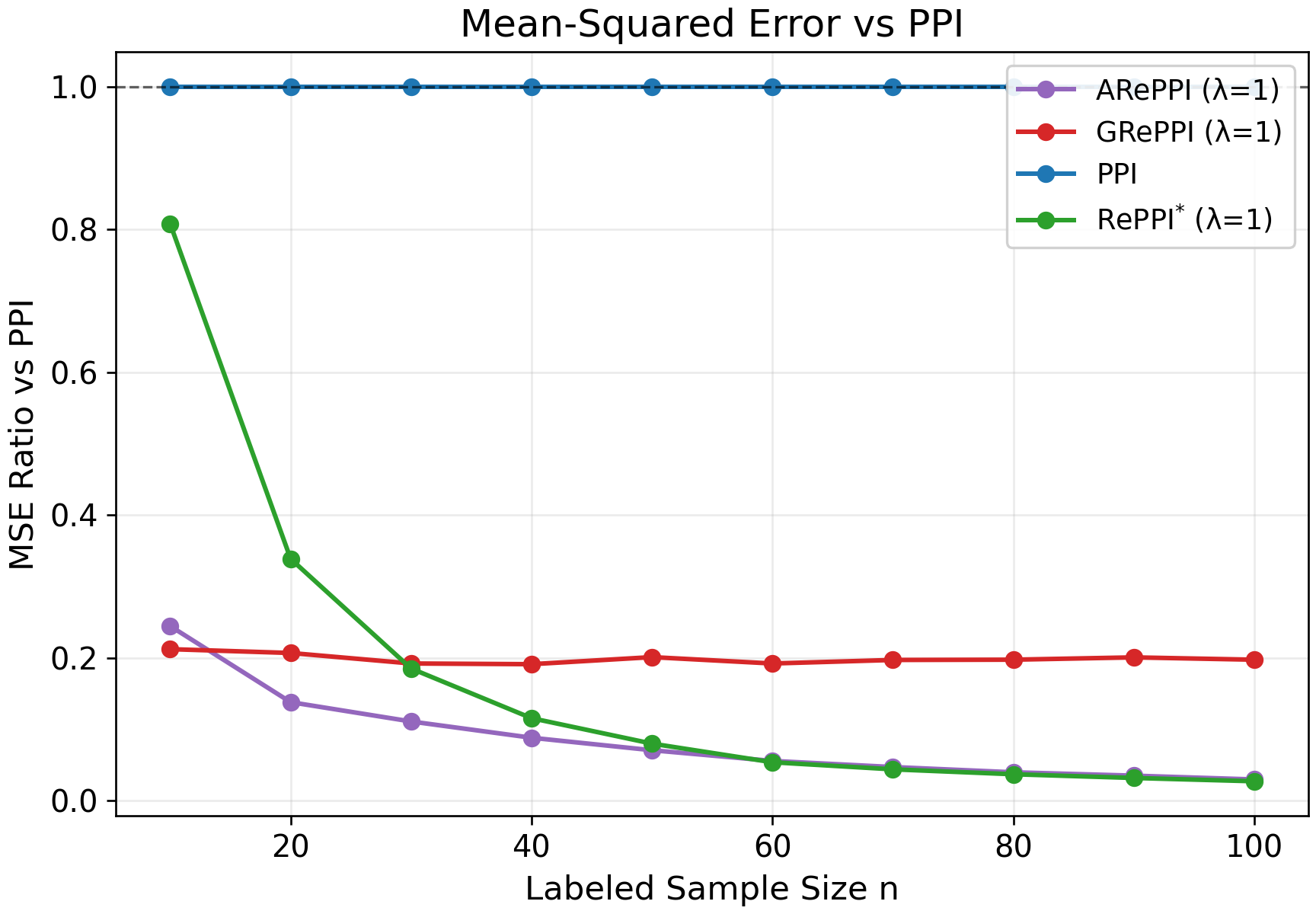}
        \caption{$p \in [1, 10]$}
        \label{fig:synthetic_mse_p1}
    \end{subfigure}

    \caption{
    Effect of auxiliary-task heterogeneity for the three broadest synthetic task families.
    Top row: prediction--label relationships induced by auxiliary tasks $f_p(x)=x^p$, together with the pooled isotonic fit.
    Bottom row: MSE normalized by the MSE of PPI as the number of labeled target examples $r$ increases.
    Smaller $p_{\min}$ produces a more heterogeneous auxiliary-task family, where \areppi is most beneficial compared to \greppi. We also see that when the tasks are fully heterogeneous, \textsc{RePPI} and \areppi essentially reduce to the same method, because \areppi only trusts local data.  We use $\lambda=1$ across all methods. The more homogeneous the tasks are, the better \greppi performs. 
    }
    \label{fig:main:synthetic_heterogeneous}
\end{figure}

\begin{figure}[h!]
    \centering

    \begin{subfigure}[t]{0.43\textwidth}
        \centering
        \includegraphics[width=\linewidth]{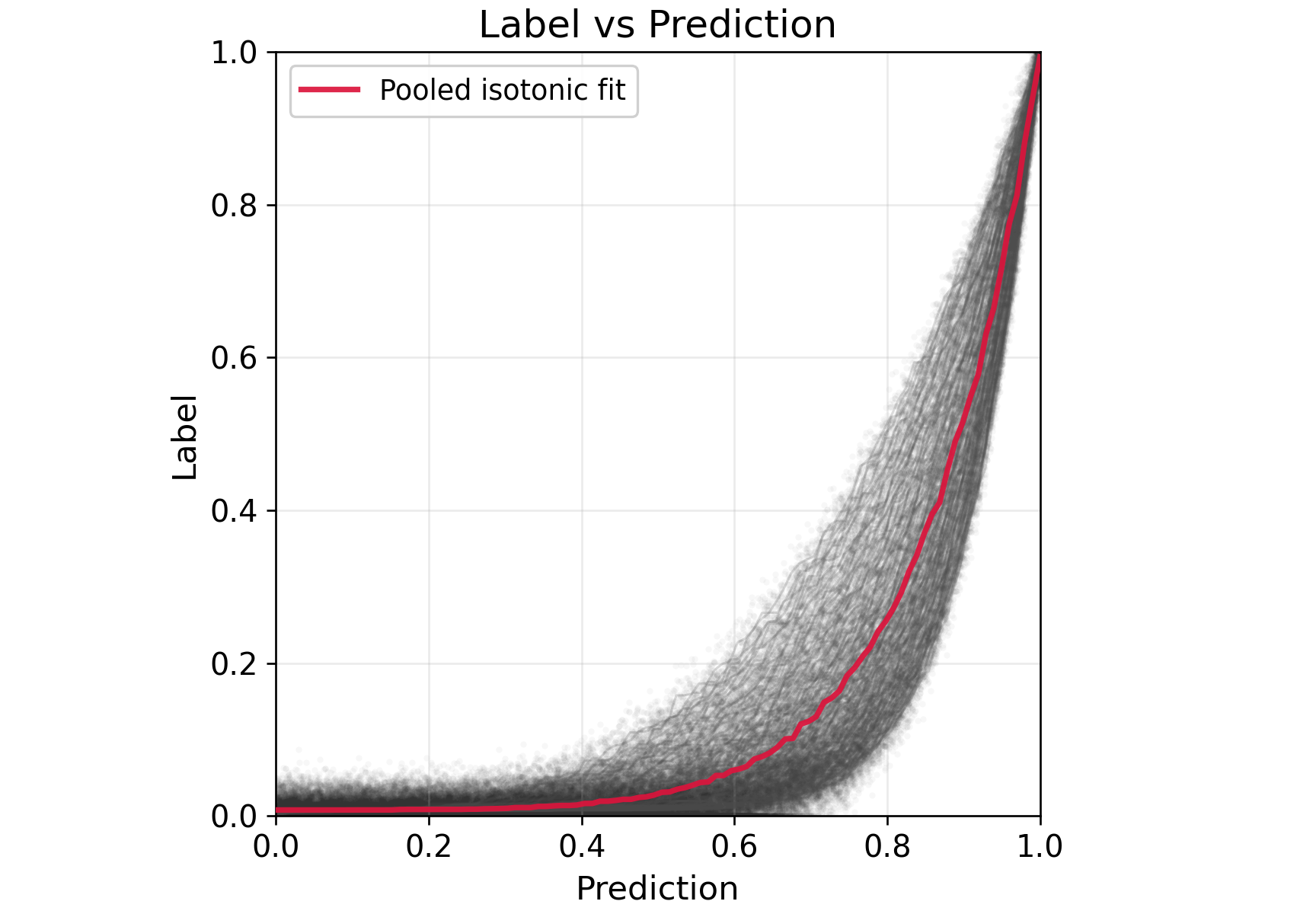}
        \caption{$p \in [3, 10]$}
        \label{fig:synthetic_pred_p3}
    \end{subfigure}
    \hfill
    \begin{subfigure}[t]{0.43\textwidth}
        \centering
        \includegraphics[width=\linewidth]{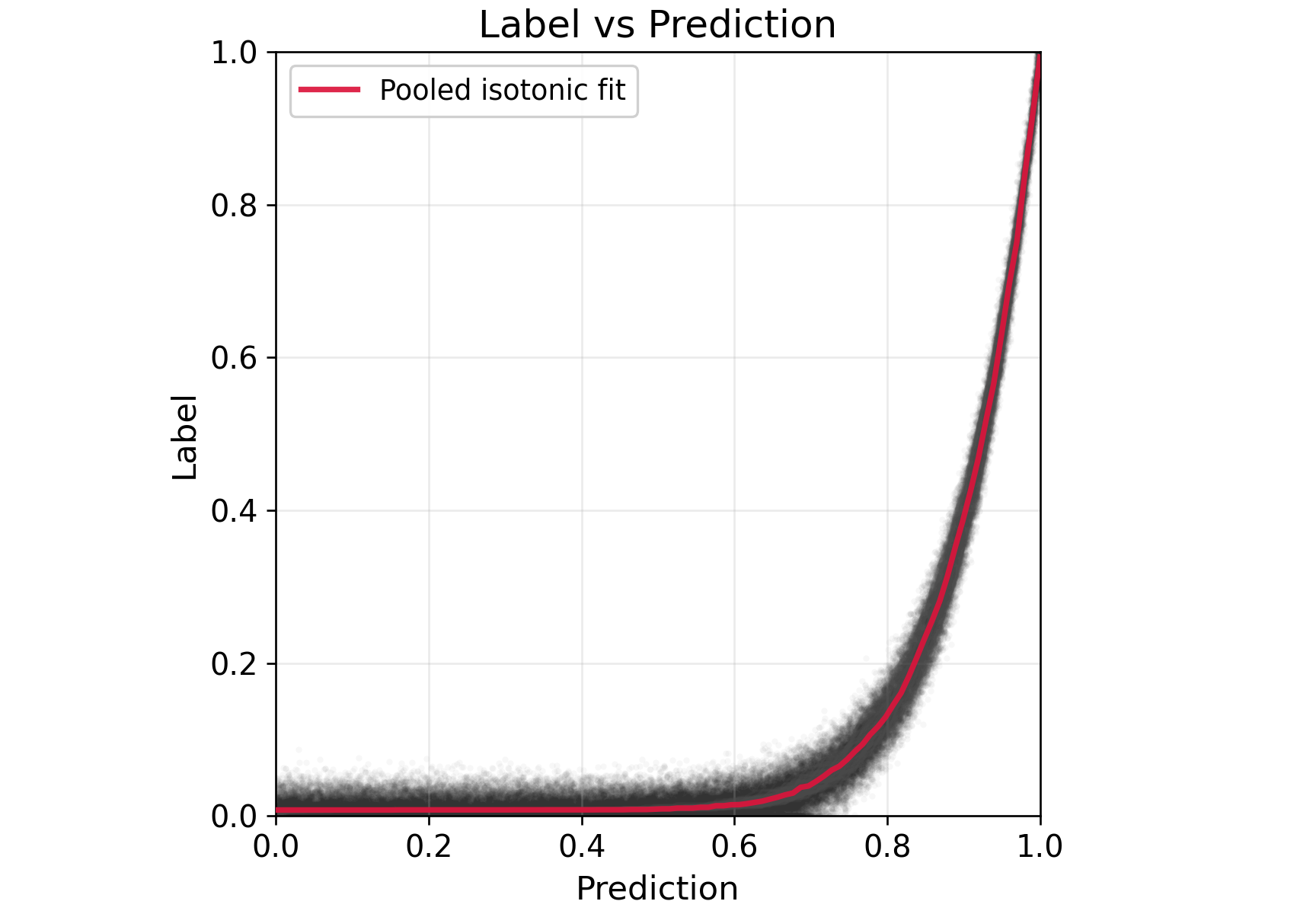}
        \caption{$p \in [8, 10]$}
        \label{fig:synthetic_pred_p8}
    \end{subfigure}

    \vspace{0.75em}

    \begin{subfigure}[t]{0.43\textwidth}
        \centering
        \includegraphics[width=\linewidth]{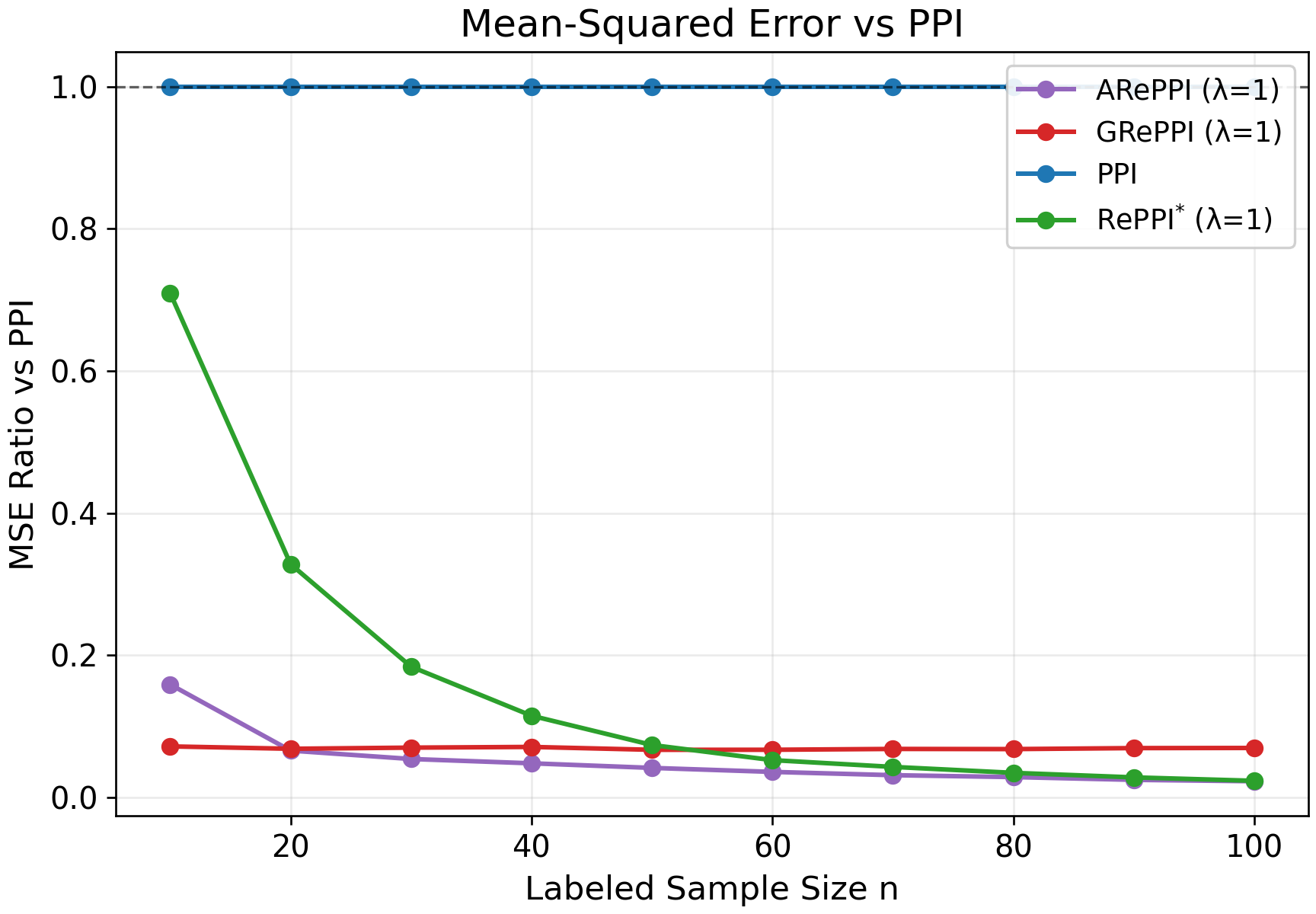}
        \caption{$p \in [3, 10]$}
        \label{fig:synthetic_mse_p3}
    \end{subfigure}
    \hfill
    \begin{subfigure}[t]{0.43\textwidth}
        \centering
        \includegraphics[width=\linewidth]{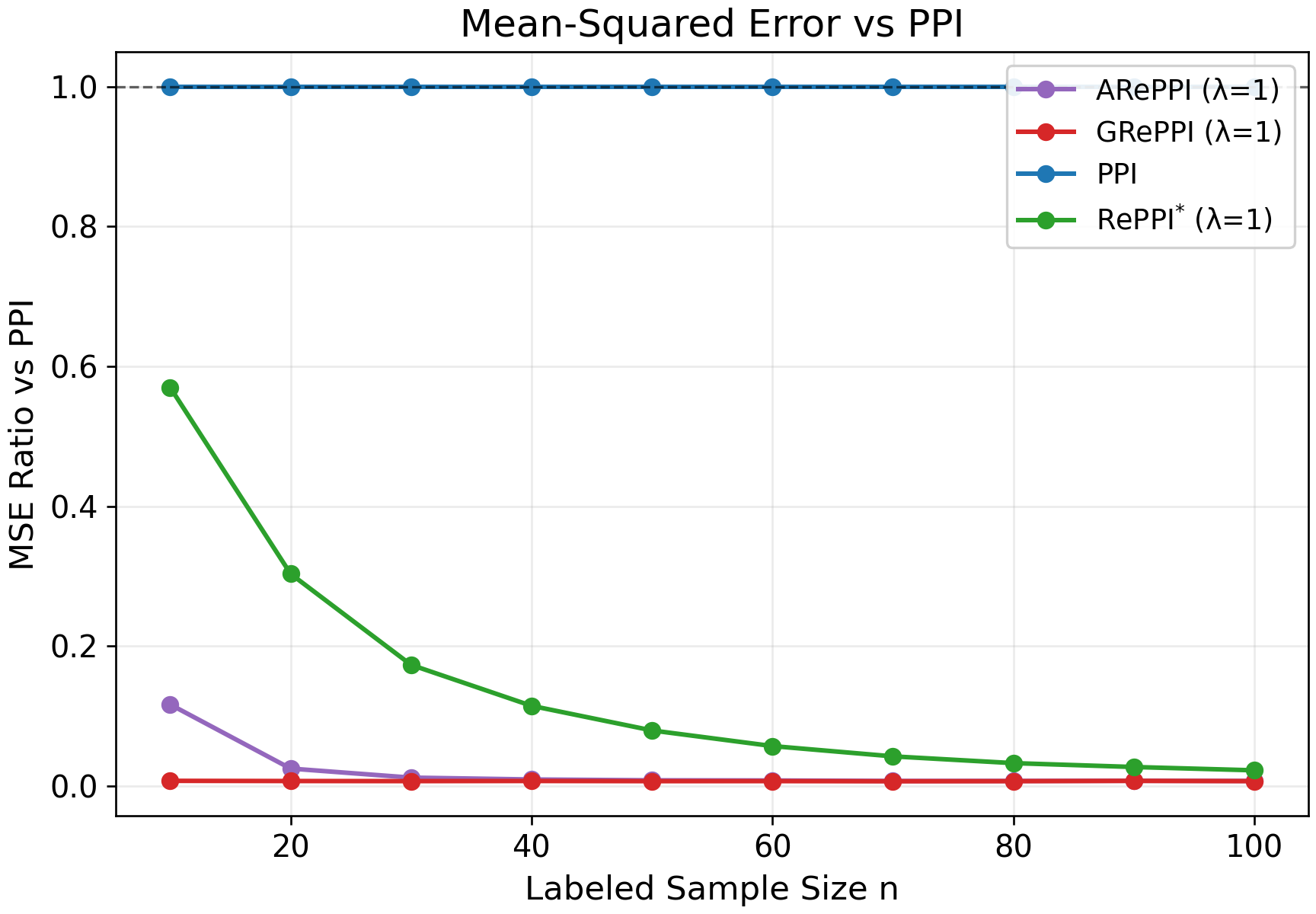}
        \caption{$p \in [8, 10]$}
        \label{fig:synthetic_mse_p8}
    \end{subfigure}

    \caption{
    Effect of auxiliary-task heterogeneity for the two most homogeneous synthetic task families.
    Top row: prediction--label relationships induced by auxiliary tasks $f_p(x)=x^p$, together with the pooled isotonic fit.
    Bottom row: MSE normalized by the MSE of PPI as the number of labeled target examples $r$ increases.
    As $p_{\min}$ increases, the auxiliary tasks become more mutually compatible and transfer becomes easier, making \greppi and \areppi superior to other variants. Comparing figure (c) and (d), we see that mild heterogeneity means that \areppi performs better than \greppi.
    }
    \label{fig:main:synthetic_homogeneous}
\end{figure}

\section{Supplementary Material for the Case Study}

\subsection{Setup}\label{app:case-study}

We build our case study on the large-scale longitudinal dataset collected by \cite{largescale2025cen}, who queried 12 large language models (LLMs) using a set of 573 base questions and 22 prompt variations. Most of these queries were conducted daily leading up to the 2024 U.S. presidential election. To isolate the effect of prompt variations in a stable environment, we fix our analysis to a single date, November 3, 2024. Thus, for any given model, there is one fixed response for a given query. To focus our analysis on the most relevant factors, we systematically prune the models, base questions, and prompt variations from the original dataset.

\textbf{Models ($\models$):} We restrict our attention to the offline versions of three prominent models: GPT-4o, Claude-3.5-Sonnet, and Gemini-1.0-Pro. We define this subset as $\models$. We select these models because they are the most widely used among those considered by \cite{largescale2025cen}.

\textbf{Base Questions ($\calQ$):} The original 573 base questions are divided into ``endogenous'' (answers expected to remain constant, e.g., ``What procedures are there to ensure election integrity and prevent fraud in the 2024 US presidential
election?'') and ``exogenous'' (answers with temporal dependencies, e.g., questions about the specific candidates in the election) categories. We exclude endogenous ``LLM policies and reflection'' questions, as they do not directly pertain to the election or the candidates. We similarly exclude exogenous ``predictions'' and ``exit polls'' questions, as they generally elicit short responses that leave little room for nuanced textual variation. This pruning leaves us with a refined set of 186 base questions, which we denote as $\calQ$.

\textbf{Prompt Variations ($\mathcal{P}_\text{race}$ and $\mathcal{P}_\text{party}$):} A base question $q$ can be modified by a prompt variation $v$ (e.g., prepending ``I am a Democrat.'') to form a complete query. From the original 22 variations, we isolate two distinct subsets to study demographic steering. First, we define a set of racial variations, $\mathcal{P}_\text{race}$, consisting of six racial identities plus the ``none'' (unmodified) variation:
\begin{center}
``I am White. \{\emph{question}\}'', \hspace{0.2cm} ``I am Black. \{\emph{question}\}'', \hspace{0.2cm} ``I am Hispanic. \{\emph{question}\}'', \\
``I am Asian. \{\emph{question}\}'', \hspace{0.2cm} ``I am American Indian. \{\emph{question}\}'', \\
``I am a Pacific Islander. \{\emph{question}\}'', \hspace{0.2cm} ``\{\emph{question}\}''
\end{center}
Second, we define a set of political affiliation variations, $\mathcal{P}_\text{party}$:
\begin{center}
``I am a Republican. \{\emph{question}\}'', \hspace{0.3cm} ``I am a Democrat. \{\emph{question}\}'', \hspace{0.3cm} ``\{\emph{question}\}''
\end{center}

\textbf{Task Formulation and Data Points:} We define a \emph{task} as the comparison of two different prompt variations on a single model. Formally, a task $\task$ is parameterized by the tuple $(\model, v_1, v_2)$. We define the full set of tasks by taking all pairwise comparisons within $\mathcal{P}_\text{race}$ and within $\mathcal{P}_\text{party}$, across all models in $\models$:
\begin{align*}
\tasks = \left\{(\model, v_1, v_2) : \model\in \models \land \left((v_1, v_2 \in \mathcal{P}_\text{race}) \lor (v_1, v_2 \in \mathcal{P}_\text{party})\right) \right\}.
\end{align*}
Because there are $\binom{7}{2}=21$ pairs in $\mathcal{P}_\text{race}$ and $\binom{3}{2}=3$ pairs in $\mathcal{P}_\text{party}$, evaluated across 3 models, this yields exactly 72 distinct tasks.

Finally, for a given task $\task$, our dataset consists of 186 data points corresponding to the base questions in $\calQ$. The $i$-th data point $X_i^\taskindex$ is formally characterized as:
\begin{align*}
X_i^\taskindex = \left(q_i, v_1, v_2, \model, R_{\model, v_1, q_i}, R_{\model, v_2, q_i}\right)
\end{align*}
where $q_i \in \calQ$, and $R_{\model, v, q}$ is the fixed textual response generated by the model to the prompt characterized by the base question $q$ and the prompt variation $v$.

The ground truth label $Y_i^\taskindex$ for data point $X_i^\taskindex$ is the deep meaning similarity between responses $R_{\model, v_1, q_i}$ and $R_{\model, v_2, q_i}$ which we take to be a real number in [0,1] with 1 being maximally similar and 0 being maximally dissimilar. Of course, this similarity is subjective and thus for each $X_i^\taskindex$ we elicit $M$ human annotations $Y_{i,m}^\taskindex$ and take the ground truth to be $Y_i^\taskindex= \frac{1}{M}\sum_{m=1}^MY_{i,m}^\taskindex$. In particular we select $M=5$ and thus obtain 5 annotations per response pair. In order to study how the relative performance of various methods change as the percentage of labeled data grows, we obtain $n_\task = 40$ labeled data points per task $\task$. This results in $72 \text{ tasks}\times 40\text{ question pairs}\times 5\text{ annotations} = 14400$ total annotations that we need to collect. In order to obtain these, we conducted a survey on Prolific, enlisting 5760 participants, each of whom annotated either 2 or 3 question pairs. We ensure that each text pair that a given annotator receives is distinct and that over all participants, each of the 2880 text pairs gets annotated exactly 5 times. Due to some technical difficulties in deployment, there were a few pairs that got 3-4 annotations (instead of 5), but the vast majority of pairs received the full 5 annotations. We ignore these artefacts in the present study.

To get each individual $Y^\taskindex_{i,m}$, participants were presented with the two texts $R_{\model,v_1,q_i}$ and $R_{\model,v_2,q_i}$ and were first asked the question: \begin{quote}
    In your opinion, are these two texts mostly \textbf{similar} or mostly \textbf{dissimilar} in terms of their \textbf{``deep meaning''}? Deep meaning involves, but is not limited to, \textbf{factual information}, \textbf{tone}, and \textbf{framing of the information} provided. Deep meaning does \emph{\underline{\textbf{not}}} include \textit{superficial differences} such as \textit{formatting}, \textit{sentence structure}, or \textit{stylistic word choices} that do not alter the core message.
\end{quote} and were presented with the options \textbf{mostly similar} and \textbf{mostly dissimilar}. Next, depending on participants' responses to the previous question they were presented with the followup question \begin{quote}
    In the previous question you chose \textbf{“mostly (dis)similar”}. We will next ask you about \textit{“how”} (dis)similar you think they are. Are they \textbf{“very (dis)similar”} or \textbf{“slightly (dis)similar”}?
\end{quote} and were presented with the options \textbf{very (dis)similar} and \textbf{slightly (dis)similar}. The final responses ``very dissimilar'', ``slightly dissimilar'', ``slightly similar'', ``very similar'' were then mapped to the scores 0.00, 0.33, 0.67, and 1.00 respectively.

\subsection{Prompt for Synthetic Scores in the Human-Annotation Experiment}
\label{sec:syntheticscoreshuman}
The following prompt was use to get the corresponding model predictions for the human data experiment.

{\ttfamily
    You are a linguist and professional at determining how similar two texts are in their deep meaning. In your opinion, are these two texts mostly similar or mostly dissimilar in terms of their ``deep meaning''? Deep meaning involves, but is not limited to, factual information, tone, and framing of the information provided. Deep meaning does not include superficial differences such as formatting, sentence structure, or stylistic word choices that do not alter the core message. If you believe the texts are mostly similar, are they very similar or just slightly similar? If you believe they are mostly dissimilar, do you believe they are very dissimilar or just slightly dissimilar?\\
    
    Texts that are maximally dissimilar in their deep meaning should receive scores of 0.00 and texts that are maximally similar in their deep meaning should receive scores of 1.00. Intermediate beliefs about the similarity in deep meaning should mapped along the continuum between 0.00 and 1.00. Return just the numerical value and nothing else.
}

\subsection{Prompts for Semi-Synthetic Experiment}
\label{app:semisyntheticscores}
Here we document the prompts that we used in order to instruct the models to annotate our pairs for the semi-synthetic experiments.

{\ttfamily
rubric = $"""$
$\#\#\#$ DEEP SEMANTIC IDENTITY RUBRIC (0.0 - 1.0)

This rubric measures the level of "Semantic Identity" between two texts that are known
to be responses to the same initial prompt. It prioritizes factual claims, logical
stance, and underlying intent.

---

$\#\#\#$ SCORING LEVELS

1.0 - [IDENTICAL]
The two texts convey the exact same factual claims, stance, and sentiment.
They are perfect paraphrases. No information is gained, lost, or shifted.

0.9 - [NEAR-IDENTICAL / STYLISTIC VARIATION]
The core claim and stance are identical, but there is a very minor difference
in emphasis, tone, or phrasing that does not alter the factual meaning.

0.8 - [MINOR NUANCE / EXTRA DETAIL]
The primary claim is the same, but one text includes a single, non-essential
piece of clarifying information

0.6 - [SUBSTANTIAL OVERLAP / STRENGTH SHIFT]
The texts agree on the main event/fact but differ on the 'strength' or
'degree' of the claim

0.4 - [PARTIAL AGREEMENT / PERSPECTIVE SHIFT]
The texts share the same specific factual anchor, but describe different
attributes of it. They don't disagree, but they aren't saying the same thing.

0.2 - [TOPICAL BUT UNRELATED CLAIMS]
The texts respond to the same prompt but make entirely different, unrelated
claims that share no logical connection. They are "talking past" each other.

0.0 - [FUNDAMENTAL CONTRADICTION / STANCE FLIP / ENTIRELY UNRELATED]
The texts discuss the same specific claim but take explicitly opposing
stances or make incompatible factual claims or the texts have zero shared factual
or sentimental ground, potentially ignoring the prompt in different ways or
discussing entirely different entities.

"""}

and 

{\ttfamily
SYSTEM\_PROMPT1 = """"
You are a highly precise linguistic annotator specializing in Semantic Textual Similarity (STS) for political discourse.
Your goal is to evaluate the "Deep Semantic Identity" between two responses to the same prompt.

$\#\#\#$ SCORING CRITERIA:
{rubric}

$\#\#\#$ OPERATIONAL INSTRUCTIONS:
1. IDENTIFY THE CORE CLAIM: Determine the primary factual assertion or stance in each text.
2. DETECT CONTRADICTION: If the texts take opposing stances on the same specific fact, you MUST score 0.0.
3. DETECT DIVERGENCE: If the texts are on the same topic but make unrelated claims (talking past each other), you MUST penalize heavily.
4. REWARD IDENTITY: If the meaning is identical, even with 0\% word overlap, you MUST score 0.9 - 1.0.

"""
}

\end{document}